\newtheorem{theorem}{Theorem}
\newtheorem{corollary}{Corollary}%
\newtheorem{lemma}{Lemma}%
\newtheorem*{theorem*}{Theorem}
\newtheorem*{definition*}{Definition}
\newcommand\Ba{\bm{a}}
\newcommand\Bm{\bm{m}}
\newcommand\Bp{\bm{p}}
\newcommand\Bq{\bm{q}}
\newcommand\Bw{\bm{w}}
\newcommand\By{\bm{y}}
\newcommand\Bz{\bm{z}}
\newcommand\Bla{\bm{\lambda}}
\newcommand\Bth{\bm{\theta}}
\newcommand\Bps{\bm{\psi}}
\newcommand\Bom{\bm{\omega}}
 \newcommand{\dR}{\mathbb{R}}
\newcommand{\rE}{\mathrm{E}}
\newcommand{\rK}{\mathrm{K}} \newcommand{\rL}{\mathrm{L}}
 \newcommand{\rP}{\mathrm{P}}
 \newcommand{\cP}{\mathcal{P}}
\newcommand{\sA}{\mathscr{A}}
 \newcommand{\sR}{\mathscr{R}}
\newcommand{\sS}{\mathscr{S}}
 \newcommand{\Rd}{\mathrm{d}}
\newcommand{\Ro}{\mathrm{o}}
\newcommand\EXP{\mathbf{\mathrm{E}}}
\newcommand\argmax{\mathop{\mathrm{argmax}\,}}
\newcommand{\ABS}[1]{{{\left| #1 \right|}}} % |1|
\renewcommand{\leq}{\leqslant}
\renewcommand{\geq}{\geqslant}
\newcolumntype{R}[1]{>{\raggedright\arraybackslash}p{#1}}
\newcolumntype{C}[1]{>{\centering\arraybackslash}p{#1}}
\newcolumntype{L}[1]{>{\raggedleft\arraybackslash}p{#1}}
\definecolor{mColor1}{rgb}{0.95,0.95,0.95}
\title{Convergence Proof for Actor-Critic Methods
Applied to PPO and RUDDER}
\author{\vspace{0.1cm}
     Markus Holzleitner\footnotemark[1] 
     \qquad \bf Lukas Gruber\footnotemark[1] 
     \qquad \bf Jos\'{e} Arjona-Medina\footnotemark[1]
    \\\ \qquad  \bf Johannes Brandstetter\footnotemark[1] \qquad 
     \bf Sepp Hochreiter\footnotemark[1]~$~^{,}$\footnotemark[2] \\ 
  \footnotemark[1]~~ELLIS Unit Linz and LIT AI Lab, \\ 
                    Institute for Machine Learning,\\
                    Johannes Kepler University Linz, Austria \\
  \footnotemark[2]~~Institute of Advanced Research in 
                    Artificial Intelligence (IARAI)\\
 }
\begin{document}

\maketitle

\begin{abstract}
We prove under commonly used assumptions the convergence of 
actor-critic reinforcement learning algorithms, which
simultaneously learn a policy function, the actor, 
and a value function, the critic.
Both functions can be deep neural networks of arbitrary complexity.
Our framework allows showing convergence of
the well known Proximal Policy Optimization (PPO) 
and of the recently introduced RUDDER.
For the convergence proof  
we employ recently introduced 
techniques from the two time-scale stochastic approximation theory.
Our results are valid for 
actor-critic methods that use episodic samples
and that have a policy that becomes more greedy during learning.
Previous convergence proofs 
assume linear function approximation,
cannot treat episodic examples, or
do not consider that policies become greedy.
The latter is relevant since optimal policies
are typically deterministic. 
\end{abstract}

\section{Introduction}

In reinforcement learning, 
popular methods like
Proximal Policy Optimization (PPO) \citep{Schulman:17}
lack convergence proofs. 
Convergence proofs for these methods are challenging, 
since they use 
deep neural networks, 
episodes as samples, 
policies that become greedy, and
previous policies for trust region methods.
For $Q$-learning, convergence to an optimal policy has
been proven in \citet{Watkins:92,Bertsekas:96} 
as well as for TD($\lambda$) in \citet{Dayan:92}.
Convergence of SARSA to an optimal policy has been 
established for policies that become greedy, like
``greedy in the limit with infinite exploration'' (GLIE) or
``restricted rank-based randomized'' (RRR) \citep{Singh:00}.
Policy gradient methods converge to a local optimum, since
the ``policy gradient theorem'' \citep[Chapter~13.2]{Sutton:18book}
shows that they form a stochastic gradient of the objective.
Stochastic gradients converge according to
the stochastic approximation theory to an optimum \citep{Robbins:51,Kushner:78,Tsitsiklis:94,Borkar:00,Kushner:03,Borkar:08,Bhatnagar:13}.
Temporal difference (TD) convergences to a local optimum with 
smooth function approximation like by neural networks \citep{Maei:09}.
Also Deep $Q$-Networks (DQNs) \citep{Mnih:13,Mnih:15} use
a single neural network, therefore can be shown to 
converge, as done in \citet{Fan:20}. However it is assumed that
every training set of reward-state transitions
is drawn iid and that 
a global minimum of the $Q$-function on the training set 
is provided.

We prove the convergence of 
general actor-critic reinforcement 
learning algorithms \citep[Chapter~13.5]{Sutton:18book}.
Recently, actor-critic methods have had 
a considerable success, 
e.g.\ at defeating humans in the game Dota~2 \citep{Berner:19}
and in mastering the game of Starcraft~II \citep{Vinyals:19}.
%Therefore, actor-critic methods are currently very popular
%reinforcement learning methods.
%Our convergence proof for actor-critic methods
%focuses on convergence but not on 
%convergence to an optimal policy,
%since that would require to ensure that neural networks do not get stuck in 
%local minima.
Actor-critic algorithms simultaneously learn a 
policy function, the actor, 
and a critic function that estimates  
values, action-values, advantages, or redistributed rewards.
The critic is responsible for credit assignment, that is,
which action or 
state-action pair was responsible for receiving a reward.
Using this credit assignment, a policy function is updated to increase the 
return.
Actor-critic algorithms are typically policy gradient
methods, but can also be reward redistribution methods 
like RUDDER \citep{Arjona-Medina:19} 
or ``backpropagation through a model'' 
\citep{Munro:87,Robinson:89,RobinsonFallside:89,Bakker:07}.
Actor-critic algorithms 
have been only proven to converge for simple settings 
like for the neural networks that are linear \citep{Konda:99NIPS,Konda:99,Xu:19,Yang:19,Liu:19}.
In contrast to these convergence proofs,
in our setting both functions 
can be deep neural networks of arbitrary complexity,
though they should not share weights. 

The main contribution of this paper is to provide 
a convergence proof for general actor-critic reinforcement 
learning algorithms.
We apply this convergence proof
to two concrete actor-critic methods. 
First, we establish convergence of
a practical variant of 
Proximal Policy Optimization (PPO) \citep{Schulman:17}.
PPO is an actor-critic on-policy gradient method
with trust region penalties to ensure
a small policy gap \citep{Schulman:15tr}.
Secondly, we prove convergence of the 
recently introduced RUDDER \citep{Arjona-Medina:19}.
RUDDER targets the problem of sparse and delayed rewards
by reward redistribution which 
directly and efficiently assigns reward 
to relevant state-action pairs. 
Thus, RUDDER dramatically speeds up learning for sparse and delayed rewards.
In RUDDER, the critic is the reward redistributing network, which is typically 
an LSTM.

The main proof techniques are recent developments from 
the two time-scale stochastic approximation theory \citep{Borkar:08}. 
The recent addition to the theory is the introduction of
controlled Markov processes \citep{Karmakar:17}, 
which can treat policies that become more greedy and 
trust region methods that use previous policies. 
The two time-scale stochastic approximation framework has been
applied previously to show convergence of actor-critic algorithms 
\citep{Konda:99NIPS,Konda:99} and, more recently, 
of Linear Quadratic Regulator (LQR) problems \citep{Xu:19} 
and off-policy TD learning \citep{Yang:19}. 
However, only tabular cases or linear function approximations
have been considered.
In a recent work, convergence was  
shown for variants of 
PPO and Trust Region Policy Optimization (TRPO) equipped 
with neural networks \citep{Liu:19}.
However, again the neural networks were only linear, 
the policy was energy-based, and 
the Kullback-Leibler term of the trust-region method
was modified.

We aim at generalizing these proofs to learning settings
which use deep neural networks,
use episodes, use policies that become greedy, and use trust region methods. 
Therefore, the idea of stationary distributions 
on state-action pairs does not apply \citep{Konda:99NIPS,Konda:99}
and we have to enrich the framework by a controlled Markov process which describes 
how the policy becomes more greedy and how to use previous policies.
%MH: maybe add comments on drawbacks resulting from assumptions
While we are developing a framework to ensure convergence, it does not
imply convergence to an optimal policy. 
Such proofs are in general difficult for methods that
use deep neural networks, since locally stable attractors 
may not correspond to optimal policies \citep{Mazumdar:19,Jin:19,Lin:19}.
However, convergence to a locally optimal policy can be proven 
for linear approximation
to $Q$-values \citep{Sutton:00,Konda:03}. 
%MH. Todo (was also remarked by reviewers): provide literature which fails at these tasks
Our main contributions to the convergence proof are, that we:
\begin{itemize}[topsep=-5pt,itemsep=1pt,partopsep=0pt, parsep=0pt,wide, labelwidth=!, labelindent=0pt]
\item use a Markov control in the
two time-scale stochastic approximation framework,
\item use episodes as samples instead of transitions,
\item allow policies to become greedy,
\item allow objectives that use previous policies (trust region methods). 
\end{itemize}

In the next section,
the main theorem is provided, which
shows local convergence of actor-critic methods. 
Next, we formulate the results for PPO and RUDDER as corollaries.
The third section gives a roadmap for the corresponding proofs, 
thereby introducing the precise framework and the results 
from stochastic approximation theory \citep{Borkar:08,Karmakar:17}. 
Finally, we discuss the technical assumptions and details for the proofs.

\section{The Main Results}

\subsection{Abstract setting and Main Theorem}
\paragraph{Preliminaries}
We consider a finite MDP defined by the 4-tuple 
$\cP=(\sS,\sA,\sR,p)$ (we assume a discount factor $\gamma=1$)
where the state space $\sS$ and 
the action space $\sA$ consist of
finitely many states $s$ and actions $a$
and $\sR$ the set of rewards $r$ which are bounded.
Let us denote by $|\sA|$ and $|\sS|$ the corresponding cardinalities and $K_R>0$ an upper bound on the absolute values of the rewards.
For a given time step $t$, the random variables for state, action, and reward are 
$S_t$, $A_t$ and $R_{t+1}=R(S_t,A_t)$, respectively.
Furthermore, $\cP$ has
transition-reward distributions
$p(S_{t+1}=s',R_{t+1}=r \mid S_t=s,A_t=a)$.
By $\pi$ we denote an associated Markov policy.
%MH: The word "measurable" was heavily critizised by reviewers. This is indeed ambiguous at this point.
%A Markov policy $\pi$ is a measurable mapping
%$\pi: \cS \rightarrow \cA$.
The (undiscounted) return of a sequence of length $T$ at time $t$ is 
$G_t = \sum_{k=0}^{T-t}  R_{t+k+1}$.
As usual, the action-value function for a given policy $\pi$ is
$q^{\pi}(s,a) = \EXP_{\pi} \left[ G_t \mid S_t=s, A_t=a \right]$.
The goal is to find the optimal policy
$\pi^* = \argmax_\pi \EXP_\pi [G_0]$. We assume that the states $s$ are time-aware 
(time $t$ can be extracted from each state)
in order to guarantee stationary optimal policies.

The abstract actor-critic setting 
is assumed to have two loss functions: $\rL_h$ for the policy 
and $\rL_g$ for the critic. Additionally we have the following building blocks:
\begin{itemize}[topsep=-5pt,itemsep=1pt,partopsep=0pt, parsep=0pt,wide, labelwidth=!, labelindent=0pt]
\item We consider two classes of parameters, denoted by $\Bom \in \dR^m $ and $\Bth \in \dR^k$. Moreover, $\Bz$ denotes an additional controlled Markov process with values in a compact metric space that may allow e.g. to force the policy to get more greedy and for treating
trust region methods which rely on previous policies (it may be used for other purposes as well, e.g. Markovian sampling). 
%$\Bz$ will take its values in a compact metric space and have some sort of a "unique limit".
$\Bz$ will be defined in a similar abstract way as done in \citet{Karmakar:17} to make the setting as general as possible. We defer the technical details to Section \ref{borkarsetting}.
\item The first loss $\rL_h(\Bth, \Bom, \Bz)$ is minimized 
with respect to $\Bth$ in order to find an optimal policy. This is achieved by updating a sufficiently smooth policy $\pi( \Bth,\Bz)$, that can be controlled by $\Bz$. We will discuss in Section~\ref{assumptions}, how $\pi$ can be constructed in specific situations.
Next we consider two optional possibilities, how $\rL_h(\Bth, \Bom, \Bz)$ may be defined: it may equal the expectation (i) $\EXP_{\tau \sim \pi(\Bth,\Bz) } 
    \left[ \phi(\tau,\Bth, \Bom,\Bz) \right]$
    or (ii) $\EXP_{\tau \sim  \breve{\pi}} 
    \left[ \phi(\pi(.;\Bth, \Bz),\tau,\Bth, \Bom,\Bz) \right]$ where the expectations are taken over whole episodes $\tau=(s_0,a_0,\ldots,s_T,a_T)$ (sequences) 
that are generated via (i) $\pi(\Bth, \Bz)$ or 
(ii) a behavioral policy $ \breve{\pi}$, respectively. It will be clear from the context, which of these two possibilities we are using.
The function $\phi$ can be interpreted as a per-sample loss for a sufficiently smooth neural network, that tries to find the optimal policy, evaluated only on a single trajectory $\tau$.
The detailed smoothness assumptions on $L_h$ that need to be imposed are discussed in Section~\ref{assumptions}. 
The gradient of  $\rL_h(\Bth, \Bom, \Bz)$ will be denoted by $h(\Bth, \Bom, \Bz).$

\item The second loss is given by $\rL_g(\Bth, \Bom, \Bz) = \EXP_{\tau \sim \pi(\Bth, \Bz)} 
    \left[ \Phi(g(\tau ; \Bom, \Bz) ,\tau,\Bth, \Bom,\Bz) \right]$ and is minimized with respect 
to $\Bom$ in order to find an optimal critic function 
$g( \tau ; \Bom, \Bz )$. The functions $g$ and $\Phi$ should again be sufficiently smooth, such that $\rL_g(\Bth, \Bom, \Bz)$ satisfies (L1)--(L3) from Section~\ref{assumptions}. $\Phi$ can be seen as the per-sample loss for the critic $g$. The gradient of $\rL_g$ will be denoted by $f$.

\item Since the expectations cannot be computed analytically, 
we do not have the exact gradients $h(\Bth, \Bom, \Bz)$ and $f(\Bth, \Bom, \Bz)$.
Therefore, the expectations are approximated by 
sampling sequences $\tau$ and computing the average gradient on
the sampled sequences.
In our case, the stochastic approximations $\hat{h}$ and $\hat{f}$
of the gradients $h$ and $f$ respectively, are created by randomly inserting only one sample trajectory $\tau$, i.e. we are dealing with online stochastic gradients. A formal description of the sampling process can be found in Section~\ref{martingale}. Our losses are then minimized using online stochastic gradient descent (SGD) with learning rates $a(n)$ and $b(n)$, where the integer $n\geq 0$ denotes the timestep of our iteration.
\end{itemize}
For a given $n$, let us now state the discussed building blocks in a more compact and formal way:

\begin{align}
\label{lossha}
    &\rL_h(\Bth_n, \Bom_n, \Bz_n)  \ =  \  \EXP_{\tau \sim \pi(\Bth_n,\Bz_n) } 
    \left[ \phi(\tau,\Bth_n, \Bom_n,\Bz_n) \right]  \ ,  \\ \nonumber
    &h(\Bth_n, \Bom_n, \Bz_n) \ =  \  \EXP_{\tau \sim \pi(\Bth_n,\Bz_n) } 
    \left[ \nabla_{\theta_n} \log \pi(\Bth_n,\Bz_n) \ \phi(\tau,\Bth_n, \Bom_n,\Bz_n)
    \ + \ \nabla_{\theta_n} \phi(\tau,\Bth_n, \Bom_n,\Bz_n) \right] \ ,
\end{align}

where the first possibility for the policy loss $L_h$ and its gradient $h$ is listed. $h$ is computed by the Policy Gradient Theorem, which can be found e.g. in \citet[Chapter~13.2]{Sutton:18book}. Next we discuss the expressions for the second possibility for $L_h$ (when sampling via a behavioral policy $\breve{\pi}$):

\begin{align}
\label{losshb}
    &\rL_h(\Bth_n, \Bom_n, \Bz_n)  \ =  \  \EXP_{\tau \sim  \breve{\pi}} 
    \left[ \phi(\pi(.;\Bth_n, \Bz_n),\tau,\Bth_n, \Bom_n,\Bz_n) \right]  \ ,  \\ \nonumber
     &h(\Bth_n, \Bom_n, \Bz_n) \ =  \  \EXP_{\tau \sim  \breve{\pi} } 
    \left[ \nabla_{\theta_n}  \phi(\pi(.;\Bth_n, \Bz_n),\tau,\Bth_n, \Bom_n,\Bz_n) \right] \ . \\
\label{lossg}
\end{align}

The expressions for our second loss $L_g$  and its gradient $f$ are as follows:

\begin{align}
    &\rL_g(\Bth_n, \Bom_n, \Bz_n) \ =  \ \EXP_{\tau \sim \pi(\Bth_n, \Bz_n)} 
    \left[ \Phi(g(\tau ; \Bom_n, \Bz_n) ,\tau,\Bth_n, \Bom_n,\Bz_n) \right]  \ , \\ \nonumber
     &f(\Bth_n, \Bom_n, \Bz_n) \ =  \  \EXP_{\tau \sim \pi(\Bth_n,\Bz_n) } 
    \left[\nabla_{\omega_n}  \Phi(g(\tau ; \Bom_n, \Bz_n) ,\tau,\Bth_n, \Bom_n,\Bz_n)  \right] \ , 
\end{align}

and finally, the iterative algorithm that optimizes the losses by online SGD, is given by:

\begin{align}
\label{iter1a}
 &\Bth_{n+1}  \  =  \ \Bth_n  \ -  \ a(n)  \ \hat{h} (\Bth_n, \Bom_n, \Bz_n) \ , \quad 
 \Bom_{n+1}  \ =  \ \Bom_n  \  -  \ b(n) \ \hat{f}(\Bth_n, \Bom_n, \Bz_n)  \ .
\end{align}

\paragraph{Main Theorem.}
Our main result will guarantee local convergence for \eqref{iter1a}. To this end we fix a starting point $(\Bth_0, \Bom_0)$ and determine an associated neighborhood $V_0 \times U_0$ which can be constructed by the loss assumptions (L1)--(L3) given in Section~\ref{assumptions}. The iterates \eqref{iter1a} will always stay in $V_0 \times U_0$ by these assumptions. Furthermore, let us denote the loss functions that result after considering the ``limit'' of the control sequence $\Bz_n \to \Bz$ by $L_h(\Bth,\Bom)$ (and similarly $L_g(\Bth,\Bom)$). Again we refer to Section \ref{borkarsetting} for a precise account on this informal description. Moreover, we denote a local minimum of $L_g(\Bth,\cdot)$ by $\Bla(\Bth)$, whereas $\Bth^*(\Bth_0, \Bom_0))$ should indicate a local minimum of $L_h(\cdot,\Bla(\cdot))$ in $V_0 \times U_0$. Also here we refer to Section ~\ref{assumptions} for a precise discussion. We can now state our main theorem:
\begin{theorem}
\label{th:convergence}
 Fix a starting point $(\Bth_0, \Bom_0)$. 
 Determine the associated neighborhood $V_0 \times U_0$ as in Section~\ref{assumptions}.
 Assume learning rates like (A4) for the time-scales $a(n)$ and $b(n)$ mentioned in 
 Section~\ref{borkarsetting}. Also take the loss assumptions in Section ~\ref{assumptions} for granted. \\ \noindent
Then Eq.~\eqref{iter1a}
converges to a local minimum $(\Bth^*(\Bth_0, \Bom_0),\Bla(\Bth^*(\Bth_0, \Bom_0)))$ of the associated losses  (i) Eq.~\eqref{lossha} or (ii) Eq.~\eqref{losshb} and Eq.~\eqref{lossg}: 
$
(\Bth_n, \Bom_n) \ \to \ (\Bth^*(\Bth_0, \Bom_0),\Bla(\Bth^*(\Bth_0, \Bom_0)))
\ \ \text{a.s.}
$
as $ n \ \to \ \infty. $
\end{theorem}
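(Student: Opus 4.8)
The plan is to cast the coupled recursion~\eqref{iter1a} into the two time-scale stochastic approximation framework of \citet{Borkar:08}, as extended to controlled Markov noise by \citet{Karmakar:17}, and to analyze the associated limiting ODEs by the ODE method. The first step is to rewrite each stochastic update as an exact-gradient term plus a martingale difference. Writing $\hat{h}(\Bth_n,\Bom_n,\Bz_n) = h(\Bth_n,\Bom_n,\Bz_n) + M^h_{n+1}$ and $\hat{f}(\Bth_n,\Bom_n,\Bz_n) = f(\Bth_n,\Bom_n,\Bz_n) + M^f_{n+1}$, where $M^h_{n+1},M^f_{n+1}$ are martingale differences with respect to the natural filtration generated by the sampled trajectories, I would first verify that these noise terms have bounded conditional second moments (using the boundedness of rewards through $K_R$ and the smoothness and boundedness of the gradients on $V_0 \times U_0$), so that the standard Robbins--Monro and martingale conditions underlying (A4) hold.

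The second step exploits the time-scale separation imposed by (A4): since $a(n) = o(b(n))$, the critic parameter $\Bom$ evolves on the fast time-scale while the actor parameter $\Bth$ evolves on the slow one. From the viewpoint of the fast recursion, $\Bth$ is quasi-static, so I would study the fast ODE $\dot{\Bom} = -f(\Bth,\Bom)$ with $\Bth$ frozen. The loss assumptions (L1)--(L3) should guarantee that, within $U_0$, the map $\Bom \mapsto L_g(\Bth,\Bom)$ has a locally asymptotically stable equilibrium $\Bla(\Bth)$ for this gradient flow and that $\Bla(\cdot)$ is Lipschitz (via an implicit-function argument using local strong convexity in the neighborhood). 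Invoking the fast-time-scale tracking lemma, one then obtains $\Bom_n - \Bla(\Bth_n) \to 0$ almost surely.

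The third step analyzes the slow recursion. Substituting the tracked value, so that effectively $\Bom_n \approx \Bla(\Bth_n)$, the slow ODE becomes $\dot{\Bth} = -h(\Bth,\Bla(\Bth))$, which is the gradient flow of $L_h(\cdot,\Bla(\cdot))$. Using (L1)--(L3) again, I would show this flow admits $\Bth^*(\Bth_0,\Bom_0)$ as a locally asymptotically stable equilibrium inside $V_0$, and then the slow-time-scale convergence theorem yields $\Bth_n \to \Bth^*$ almost surely; combined with the tracking result this gives the claimed joint limit $(\Bth_n,\Bom_n) \to (\Bth^*(\Bth_0,\Bom_0),\Bla(\Bth^*(\Bth_0,\Bom_0)))$.

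The main obstacle, and the reason the \citet{Karmakar:17} extension is needed, is the controlled Markov process $\Bz_n$: the gradients $h,f$ depend on $\Bz_n$, which is neither i.i.d.\ nor frozen, so the drift of each ODE is not literally $-h$ or $-f$ but an average of $h(\Bth,\Bom,\cdot)$, respectively $f(\Bth,\Bom,\cdot)$, against the (parametrized) limiting occupation measure of $\Bz$ that arises as the policy becomes greedy. Establishing that this Markov-averaged drift is well defined, that the $\Bz_n$ chain admits the required ergodic behavior, and that the averaging preserves local asymptotic stability of the equilibria --- all while keeping the iterates confined to $V_0 \times U_0$, which the loss assumptions are designed to enforce --- is where the delicate work lies. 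Verifying the full hypotheses of the controlled-Markov two time-scale theorem for the concrete gradients arising from~\eqref{lossha}, \eqref{losshb}, and~\eqref{lossg} constitutes the technical heart of the argument.
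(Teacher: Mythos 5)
Your proposal takes essentially the same route as the paper: the paper's proof likewise verifies the assumptions (A1)--(A7) --- martingale-difference noise with bounded second moments, gradient-system stability of the fast and slow ODEs under (L1)--(L3) with $\Bla(\Bth)$ obtained via the implicit function theorem, and occupation-measure averaging over the controlled Markov process $\Bz_n$ --- and then invokes Theorem~\ref{th:karmakar} of \citet{Karmakar:17} as a black box. Your sketch merely unpacks the fast-tracking and slow-ODE mechanics that the cited theorem packages, so the two arguments coincide in substance.
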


\subsection{Convergence Proof for PPO.}
The main theorem is applied to prove convergence of PPO.
Our PPO variant uses deep neural networks, 
softmax outputs for the policy network, 
regularization, trust region, or exploration terms.
Regularization can be entropy, weight decay, 
or a trust region penalty like the Kullback-Leibler divergence. 
All functions are assumed to be sufficiently smooth, i.e.\ 
at least three times continuously differentiable 
and bounded wrt. the parameters. 
The losses should satisfy (L1)--(L3) from Section~\ref{assumptions}. 
The PPO alorithm aims at minimizing the following losses:

\begin{align} 
 \label{lossh2}
    &\rL_h(\Bth_n, \Bom_n, \Bz_n)  \ =  \  \EXP_{\tau \sim \pi(\Bth_n,\Bz_n) } 
    \left[ - \ G_0 \ + \ (z_2)_n \ \rho(\tau,\Bth_n,\Bz_n) \right]  \ , \\
    % \nonumber
 %   &h(\Bth_n, \Bom_n, \Bz_n) \ =  
%    \EXP_{\tau \sim \pi(\Bth_n,\Bz_n) } 
%    \left[ - \ \sum_{t=0}^T \nabla_{\Bth} \ln \pi(a_t \mid s_t ; \Bth_n,\Bz_n) \ 
%    \left( \hat{q}^{\pi}(s_{t},a_{t}) \ - \ \hat{v}^\pi(s_t) \right) \right. \\ \nonumber
%     &\left. + \ (z_2)_n \  \sum_{t=0}^T \nabla_{\Bth_n} \ln \pi(a_t \mid s_t ; \Bth_n,\Bz_n) 
%     \ \rho(\tau,\Bth_n,\Bz_n)
%     \ + \ (z_2)_n  \ \nabla_{\Bth_n}  \rho(\tau,\Bth_n,\Bz_n) \right] \ , \\
 \label{lossg2}
     &\rL^\mathrm{TD}_g(\Bth_n, \Bom_n, \Bz_n) \ = 
     \EXP_{\tau \sim \pi(\Bth_n,\Bz_n)} \left[\frac{1}{2} \
      \sum_{t=0}^{T} \big( \delta^{\mathrm{TD}}(t) \big)^2 \right]  \ , \\
  %  &f^\mathrm{TD}(\Bth_n, \Bom_n, \Bz_n) \ =       
%    \EXP_{\tau \sim \pi(\Bth_n,\Bz_n)} \left[
 %     \sum_{t=0}^{T}   \delta^{\mathrm{TD}}(t) \ \nabla_{\Bom_n} \hat{q}^{\pi}(s_t,a_t;\Bom_n) \right]  \ , \\
 \label{lossg2MC}
     &\rL^\mathrm{MC}_g(\Bth_n, \Bom_n, \Bz_n) \ = 
     \EXP_{\tau \sim \pi(\Bth_n,\Bz_n)} \left[\frac{1}{2} \
      \sum_{t=0}^{T} \bigg( G_t \ - \ \hat{q}^{\pi}(s_t,a_t;\Bom_n) \bigg)^2 \right]  \ , \\
      %\nonumber 
 %   &f^\mathrm{MC}(\Bth_n, \Bom_n, \Bz_n) \ = \  \EXP_{\tau \sim \pi(\Bth_n,\Bz_n)} \left[
 %     \sum_{t=0}^{T} \bigg(  G_t \ - \ \hat{q}^{\pi}(s_t,a_t;\Bom_n) \bigg) \ \nabla_{\Bom_n} \hat{q}^{\pi}(s_t,a_t;\Bom_n) \right]  \ , \\
\label{iter1PPO}
 &\Bth_{n+1}  \  =  \ \Bth_n  \ -  \ a(n)  \ \hat{h} (\Bth_n, \Bom_n, \Bz_n) \ , \quad
 \Bom_{n+1}  \ =  \ \Bom_n  \  - \ b(n) \ \hat{f}(\Bth_n, \Bom_n, \Bz_n) \ .
\end{align}

Let us now briefly describe the terms in Eq. \eqref{lossh2}--\eqref{iter1PPO}:
\begin{itemize}[topsep=-5pt,itemsep=1pt,partopsep=0pt, parsep=0pt,wide, labelwidth=!, labelindent=0pt]
\item $\hat{q}^\pi(s_t, a_t; \Bom)$ is a function that approximates the $Q$-value $q^\pi(s_t,a_t)$.%= \EXP_\pi[ \sum_{\tau=t}^T R(s_{\tau},a_{\tau}) \mid s_t,a_t]$.
\item $\delta^{\mathrm{TD}}(t) =  R(s_t,a_t)  + \hat{q}^{\pi}(s_{t+1},a_{t+1};\Bom_{n-1} )  -    \hat{q}^{\pi}(s_t,a_t;\Bom_n )$ is the temporal difference error.
\item The exact gradients $h(\Bth_n, \Bom_n, \Bz_n)$ 
    (from the Policy Gradient Theorem assuming causality and subtracting a baseline \citep[Chapter~13.2]{Sutton:18book}),  
    $f^\mathrm{TD}(\Bth_n, \Bom_n, \Bz_n)$ and  
    $f^\mathrm{MC}(\Bth_n, \Bom_n, \Bz_n)$ of the respective losses
    can be found in Sections \ref{sec:ppo} and \ref{causality} in the appendix.
    \item $\Bz_n=((z_1)_n,(z_2)_n,(z_1)_{n-1},(z_2)_{n-1},\Bth_{n-1},\Bom_{n-1})$ 
     denotes an additional controlled Markov process with values in compact sets.
     The controlled Markov process is essential 
     to define the trust region term of PPO which uses previous values of $\Bth$ and $z_1$. Here, 
     $z_1 \in [1,\beta]$ increases from $1$ to $\beta>1$ and $z_2 \in [0,(z_2)_0]$ 
     decreases from $(z_2)_0>1$ to $0$. $z_1$ controls the amount of greediness and $z_2$ the regularization. Details can be found in Section \ref{sec:control_details}. %; concerning the choice of $\beta$ we refer to Sections~\ref{sec:finte_greedy} and~\ref{sec:fgreedy} in the appendix.
     \item $\pi(\Bth_n,\Bz_n)$ is a softmax policy that depends on $(z_1)_n$ to make it more greedy. We will introduce it precisely in Section \ref{sec:control_details}, especially  Eq.~\eqref{eq:policy} there.
      $\pi$ is learned using $\hat{q}$ and updated in every time-step.
      \item $\rho(\tau,\Bth_n,\Bz_n)$ includes the trust region term of PPO and 
      may also include regularization terms like weight decay or entropy regularization.
      For example, 
      $\rho(\tau,\Bth_n,\Bz_n)= \rK\rL_{\epsilon}(\pi(\Bth_{n-1},(z_1)_{n-1}), \pi(\Bth_n,(z_1)_n))$,
      where $\rK\rL_{\epsilon}(\Bp,\Bq)=\rK\rL(\tilde{\Bp},\tilde{\Bq})$ with
       $\tilde{p}_i=(p_i+\epsilon)/(1+k \epsilon)$.  %    $\rK\rL_{\epsilon}$ is the $\epsilon$ regularized Kullback-Leibler divergence. %$\rho$ is assumed to be three times continuously differentiable and bounded. 
\end{itemize}
The next corollary states that the above described PPO algorithms (TD and MC versions) converge.
\begin{corollary}[Convergence PPO]
 Fix a starting point $(\Bth_0, \Bom_0)$. 
 Determine the associated neighborhood $V_0 \times U_0$ as in Section~\ref{assumptions}.
 Assume learning rates like (A4) for the time-scales $a(n)$ and $b(n)$ mentioned in 
 Section~\ref{borkarsetting}. Also take the loss assumptions in Section ~\ref{assumptions} for granted. \\ \noindent
 Using the same notation as in Theorem~\ref{th:convergence}, the PPO algorithm Eq.~\eqref{iter1PPO} 
 converges to a local minimum $(\Bth^*(\Bth_0, \Bom_0),\Bla(\Bth^*(\Bth_0, \Bom_0)))$ of the associated losses Eq.~\eqref{lossh2} and either Eq.~\eqref{lossg2} or Eq.~\eqref{lossg2MC}: 
$
(\Bth_n, \Bom_n) \ \to \ (\Bth^*(\Bth_0, \Bom_0),\Bla(\Bth^*(\Bth_0, \Bom_0)))
\ \ \text{a.s.}
$
as $ n \ \to \ \infty $.
\end{corollary}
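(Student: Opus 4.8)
The plan is to derive the corollary from Theorem~\ref{th:convergence} by exhibiting the PPO losses as a concrete instance of the abstract actor--critic setting, and then checking that every hypothesis of the theorem is met. Concretely, I would match \eqref{lossh2} with the first abstract policy loss \eqref{lossha} by setting the per-sample loss to $\phi(\tau,\Bth_n,\Bom_n,\Bz_n) = - G_0 + (z_2)_n\,\rho(\tau,\Bth_n,\Bz_n)$, and match the critic loss \eqref{lossg2} (resp.\ \eqref{lossg2MC}) with the abstract critic loss \eqref{lossg} by taking the critic $g(\tau;\Bom_n,\Bz_n)=\hat q^{\pi}(\cdot\,;\Bom_n)$ and letting $\Phi$ be the squared TD (resp.\ Monte Carlo) error summed over the episode. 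The stochastic updates \eqref{iter1PPO} then coincide with \eqref{iter1a}, so the only remaining work is the verification of (L1)--(L3), of the step-size condition (A4), and of the controlled-Markov-process conditions of Section~\ref{borkarsetting}.

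The central verification is that the instantiated $\phi$ and $\Phi$ inherit the regularity demanded by (L1)--(L3). Here I would use that the policy and critic networks are at least three times continuously differentiable and bounded in the parameters, that the return $G_0$ and the rewards are bounded (by $K_R$ times the horizon), and that the softmax policy $\pi(\Bth_n,\Bz_n)$ of Section~\ref{sec:control_details} is smooth in $\Bth$ for each fixed greediness level $z_1\in[1,\beta]$. The delicate terms are the score function $\nabla_{\theta}\log\pi$ entering the policy gradient and the logarithms inside the trust-region penalty $\rho$; both are controlled because the $\epsilon$-smoothing $\tilde p_i=(p_i+\epsilon)/(1+k\epsilon)$ in $\rK\rL_{\epsilon}$ keeps all probabilities bounded away from $0$, so that the logarithms and their first three derivatives stay uniformly bounded. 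Since $z_1$ ranges over the \emph{compact} interval $[1,\beta]$ and $z_2$ over $[0,(z_2)_0]$, these bounds are uniform in the control on the neighborhood $V_0\times U_0$, which is exactly what (L1)--(L3) require.

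Next I would check that $\Bz_n=((z_1)_n,(z_2)_n,(z_1)_{n-1},(z_2)_{n-1},\Bth_{n-1},\Bom_{n-1})$ qualifies as the controlled Markov process recalled in Section~\ref{borkarsetting}: it takes values in a compact metric space (the iterates stay in $V_0\times U_0$ and the schedules in their compact ranges), its update is measurable and driven by the current iterates, and the deterministic schedules $z_1\uparrow\beta$, $z_2\downarrow 0$ supply the limiting control so that the ``limit'' losses $L_h(\Bth,\Bom)$ and $L_g(\Bth,\Bom)$ are well defined. Combined with the martingale-difference property of $\hat h-h$ and $\hat f-f$ from Section~\ref{martingale} (unbiasedness of the single-episode gradient estimates follows from the Policy Gradient Theorem for $h$ and from interchanging gradient and expectation for $f$) and with the two-timescale step sizes (A4), Theorem~\ref{th:convergence} then applies verbatim and yields the claimed almost-sure convergence to $(\Bth^*(\Bth_0,\Bom_0),\Bla(\Bth^*(\Bth_0,\Bom_0)))$.

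The main obstacle I anticipate is the interaction between the \emph{increasing greediness} and the \emph{bootstrapped} TD target $\hat q^{\pi}(s_{t+1},a_{t+1};\Bom_{n-1})$, which uses the previous critic parameters $\Bom_{n-1}$ rather than $\Bom_n$. This semi-gradient dependence, like the trust-region dependence on $\Bth_{n-1}$, is precisely why $\Bom_{n-1}$ and $\Bth_{n-1}$ are carried inside $\Bz_n$; the subtle point is to confirm that folding them into the slowly-varying control neither destroys the timescale separation nor spoils the uniform smoothness of $\Phi$, so that the critic recursion still tracks a well-defined local minimum $\Bla(\Bth)$ on the fast timescale. Establishing these uniform bounds as the policy approaches determinism is where the bulk of the technical care will lie.
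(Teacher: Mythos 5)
Your proposal is correct and takes essentially the same route as the paper: the paper's own proof is a one-line invocation of Theorem~\ref{th:convergence}, with the instantiation of the abstract losses, the controlled Markov process $\Bz_n$, the martingale-difference property, and assumptions (L1)--(L3)/(A1)--(A7) delegated to Section~\ref{assumptions}, Section~\ref{sec:control_details}, and the appendix (Sections~\ref{sec:ppo} and \ref{sec:dettheo1}), which is exactly the verification you spell out. Your closing observation --- that the dependence on $\Bth_{n-1}$ and $\Bom_{n-1}$ in the trust-region and TD terms is absorbed into the control $\Bz_n$ --- matches the paper's stated reason for enriching the framework with a controlled Markov process.
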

\begin{proof}
 We apply Theorem~\ref{th:convergence} since all its assumption are fulfilled.
\end{proof}

\subsection{Convergence Proof for RUDDER.}
The main theorem is applied to prove convergence of RUDDER,
which excels for tasks with sparse and delayed rewards \citep{Arjona-Medina:18}. For a recent application, see \citet{Patil:20}.
Again, we assume enough smoothness for all functions, 
i.e.\ they are at least three times continuously differentiable wrt. 
the parameters and bounded. 
The losses should satisfy (L1)--(L3) from Section~\ref{assumptions}.
We formulate the RUDDER algorithm as a minimization problem of square losses $\rL_h(\Bth_n, \Bom_n,\Bz_n)$ and $\rL_g(\Bth_n, \Bom_n, \Bz_n)$:
\begin{align} \label{lossh1}
    &\rL_h \ = \
    \EXP_{\tau \sim \breve{\pi}}
    \left[ \frac{1}{2} \ \sum_{t=0}^{T} \bigg( R_{t+1}(\tau ; \Bom_n)  -   \hat{q}(s_t, a_t; \Bth_n)\bigg)^2 \ + \ (z_2)_n \ \rho_{\Bth}(\tau,\Bth_n,\Bz_n) \right] \ , \\ 
      \label{lossg1}
     &\rL_g \ = \
     \EXP_{\tau \sim \pi(\Bth_n,\Bz_n)} \left[\frac{1}{2} \
     \bigg( \sum_{t=0}^{T} \tilde{R}_{t+1} \ - \   g( \tau ;
     \Bom_n ) \bigg)^2  \ + \ (z_2)_n \ \rho_{\Bom}(\tau,\Bth_n,\Bz_n) \right] \ . \\ 
\label{iter1RUDDER}
 &\Bth_{n+1}  \ =  \ \Bth_n  \ -  \ a(n)  \ \hat{h} (\Bth_n, \Bom_n, \Bz_n) \ , \quad
 \Bom_{n+1}  \ =  \ \Bom_n  \  -  \ b(n) \ \hat{f}(\Bth_n, \Bom_n, \Bz_n)  \ .
\end{align}
Let us now briefly describe the terms in Eq. \eqref{lossh1}--\eqref{iter1RUDDER}:
%MH: Maybe we can add some comments regarding the regularization?
\begin{itemize}[topsep=-5pt,itemsep=1pt,partopsep=0pt, parsep=0pt,wide, labelwidth=!, labelindent=0pt]
\item $\hat{q}(s,a;\Bth_n)$ is a function parametrized by $\Bth_n$ 
          that approximates the $Q$-value $q(s,a)$.
          Note that the policy loss $L_h$ implicitly depends on the policy $\pi$ via $\hat{q}$.
\item The expressions for $h(\Bth_n, \Bom_n, \Bz_n)$ and $f(\Bth_n, \Bom_n, \Bz_n)$ can be found in Section \ref{sec:rudder} in the appendix.
    \item $\tilde{R}$ is the original MDP reward.
    \item $R(\tau;\Bom_n)$ is the redistributed reward based 
          on the return decomposition of $g$ with parameter vector 
          $\Bom_n$. For a state-action sequence $\tau$ the realization of its redistributed reward $R$ is computed from
          $g(\tau; \Bom_n )$ %, $g(\tau; \Bom ), t \geq 0 $, 
          and the realization of return variable $\sum_{t=0}^T \tilde{R}_{t+1}$. In practice, $g$ can be an LSTM-network, or, e.g. in \citet{Patil:20}, $g$ is obtained by a profile model.
   % \item $g$ denotes the return decomposition function parametrized by $\Bom_n$.
    %      It predicts the return $\sum_{t=0}^T \tilde{R}_{t+1}$ given a complete state-action sequence
    %      $\tau$. In practise $g$ is an LSTM-network.
    %\item $\hat{q}(s,a;\Bth_n)$ is a function parametrized by $\Bth_n$ 
    %      that approximates the $Q$-value $q(s,a)$.%=\EXP\left[R \mid s,a \right]$. It is
    \item $\rho_{\Bth}(\tau,\Bth_n,\Bz_n)$ is a regularization term for 
    learning the $Q$-value approximation $\hat{q}$. 
    \item $\rho_{\Bom}(\tau,\Bth_n,\Bz_n)$ is a regularization term for 
    learning the reward redistribution function $g$. 
    \item $\breve{\pi}$ is a behavioral policy that does not depend on the parameters.
   %  \item $\EXP_{\pi(\Bth_n,\Bz_n)}\left[ . \right]$ ($\EXP_{\breve{\pi}}\left[ . \right]$) is an expectation over sequences $\tau=(s_0,a_0,s_1,a_1,\ldots,s_T,a_T)$, which are sampled with policy $\pi(\Bth_n,\Bz_n)$ ($\breve{\pi}$).
     \item $\Bz_n=((z_1)_n,(z_2)_n,(z_1)_{n-1},(z_2)_{n-1},\Bth_{n-1},\Bom_{n-1})$ 
     denotes an additional Markov process, where we use the same construction as in the PPO setting.
     Details can again be found in Section \ref{sec:control_details}.
     %$z_1 \in [1,\beta]$ increases from $1$ to $\beta>1$, $z_2 \in [0,(z_2)_0]$ 
     %decreases from $(z_2)_0>1$ to $0$.
     %and the parameters $\Bth$ and $\Bom$ are assumed to be from a compact set.
     \item $ \pi(\Bth_n,\Bz_n) $ is a softmax policy applied to
     $(z_1)_n \ \hat{q}$ (see Eq.~\eqref{eq:policy} for a precise introduction).
     It depends on  $(z_1)_n>1$ which makes it more greedy and $\Bth_n$ is updated in every time-step.  
\end{itemize}
The next corollary states that the RUDDER algorithm converges.
\begin{corollary}[Convergence RUDDER]
 Fix a starting point $(\Bth_0, \Bom_0)$. 
 Determine the associated neighborhood $V_0 \times U_0$ as in Section~\ref{assumptions}.
 Assume learning rates like (A4) for the time-scales $a(n)$ and $b(n)$ mentioned in Section~\ref{borkarsetting}.  
 Also take the loss assumptions in Section ~\ref{assumptions} for granted.
 Using the same notation as in Theorem~\ref{th:convergence}, the RUDDER algorithm Eq.~\eqref{iter1RUDDER} 
 converges to a local minimum $(\Bth^*(\Bth_0, \Bom_0),\Bla(\Bth^*(\Bth_0, \Bom_0)))$ of the associated losses Eq.~\eqref{lossh1} and Eq.~\eqref{lossg1}: 
$
(\Bth_n, \Bom_n) \ \to \ (\Bth^*(\Bth_0, \Bom_0),\Bla(\Bth^*(\Bth_0, \Bom_0)))
\ \ \text{a.s.}
$
as $ n \ \to \ \infty $.
\end{corollary}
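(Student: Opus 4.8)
The plan is to reduce the statement to Theorem~\ref{th:convergence} exactly as in the PPO corollary, so that the whole proof amounts to exhibiting the correspondence between the RUDDER quantities in Eq.~\eqref{lossh1}--\eqref{iter1RUDDER} and the abstract building blocks of Section~\ref{assumptions}, and then checking that the hypotheses (L1)--(L3) and (A4) hold. The dictionary is: $\Bth$ parametrises the $Q$-value approximation $\hat q$, whose softmax induces $\pi(\Bth_n,\Bz_n)$, so $\hat q$ plays the role of the actor; $\Bom$ parametrises the return-decomposition network $g(\tau;\Bom)$, the critic. The policy loss Eq.~\eqref{lossh1} is an instance of the \emph{behavioral-policy} form (ii) of $\rL_h$, since the sampling policy $\breve\pi$ carries no parameters; consequently $h$ is obtained by plain differentiation $\nabla_{\theta}$ of the per-sample square loss, with no log-derivative (policy-gradient) term, matching the expression in Section~\ref{sec:rudder}. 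The critic loss Eq.~\eqref{lossg1} is the return-decomposition instance of the abstract $\rL_g$, sampled under $\pi(\Bth_n,\Bz_n)$, with gradient $f=\nabla_{\omega}$ as in Section~\ref{sec:rudder}. The controlled Markov process $\Bz_n=((z_1)_n,(z_2)_n,(z_1)_{n-1},(z_2)_{n-1},\Bth_{n-1},\Bom_{n-1})$ is constructed exactly as for PPO (Section~\ref{sec:control_details}), so its fit into the Karmakar framework of Section~\ref{borkarsetting} is inherited.

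Next I would verify the regularity hypotheses. Both losses are quadratic in smooth, bounded network outputs plus the bounded regularisers $(z_2)_n\rho_{\Bth}$ and $(z_2)_n\rho_{\Bom}$; since all networks are at least three times continuously differentiable and bounded, the rewards are bounded by $K_R$, and the control lives in a compact set, the composed integrands and their gradients are bounded with bounded derivatives. The redistributed reward $R_{t+1}(\tau;\Bom)$ is a fixed smooth functional of the sequence output $g(\tau;\Bom)$, so it inherits this regularity in $\Bom$; this is the only RUDDER-specific term entering $\rL_h$ through its target. Taking expectations over finitely many finite-length episodes preserves the bounds, which yields the Lipschitz and growth estimates of (L1)--(L3) and keeps the iterates Eq.~\eqref{iter1RUDDER} in the neighborhood $V_0\times U_0$. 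The stochastic gradients $\hat h,\hat f$ are single-trajectory estimates, so their errors form bounded martingale differences, as required by the sampling description in Section~\ref{martingale}.

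With the abstract hypotheses in force, the two-timescale mechanism of Theorem~\ref{th:convergence} applies: on the faster scale the critic $\Bom_n$ tracks a local minimiser $\Bla(\Bth)$ of $L_g(\Bth,\cdot)$, while on the slower scale the actor $\Bth_n$ follows the reduced flow driven by $h(\cdot,\Bla(\cdot))$ toward the local minimiser $\Bth^*(\Bth_0,\Bom_0)$ of $L_h(\cdot,\Bla(\cdot))$, giving the claimed almost-sure convergence of $(\Bth_n,\Bom_n)$ to $(\Bth^*,\Bla(\Bth^*))$.

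The step I expect to be the main obstacle is not the structural matching but the verification that these concrete losses genuinely satisfy (L1)--(L3), and in particular the RUDDER-specific coupling in which the critic output \emph{defines} the regression target of the actor: the redistributed reward $R(\tau;\Bom)$ sits inside $\rL_h$, so the equilibrium ties $\Bth^*$ to $\Bla(\Bth^*)$ more tightly than in a value-baseline actor-critic. Making the reduction rigorous requires that $\Bth\mapsto\Bla(\Bth)$ be well-defined and $C^1$ on $V_0$ (an implicit-function-theorem argument needing a nondegenerate, isolated local minimum of $L_g(\Bth,\cdot)$ uniformly over $V_0$) and that the reduced flow $\dot\Bth=-h(\Bth,\Bla(\Bth))$ have a locally asymptotically stable equilibrium at $\Bth^*$. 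For an LSTM-based $g$ these nondegeneracy and local-stability conditions are precisely what (L1)--(L3) encode at the chosen starting point; once they are granted, the remaining verification is bookkeeping identical to the PPO case, and the corollary follows by invoking Theorem~\ref{th:convergence}.
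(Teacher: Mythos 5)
Your proposal is correct and follows essentially the same route as the paper: the paper's own proof is a one-line reduction to Theorem~\ref{th:convergence}, with the structural matching, the construction of $\Bz_n$, and the verification of (L1)--(L3) and (A1)--(A7) deferred to Sections~\ref{assumptions}, \ref{sec:control_details}, \ref{sec:rudder} and \ref{sec:dettheo1}, which is exactly the bookkeeping you spell out. Your added remarks (behavioral-policy form (ii) for $\rL_h$, the $\Bom$-dependence of the actor's target through $R(\tau;\Bom)$, and the implicit-function-theorem requirement for $\Bla(\Bth)$) are consistent with what the paper assumes and do not change the argument.
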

\begin{proof} 
We apply Theorem~\ref{th:convergence} since all its assumptions are fulfilled.
\end{proof}
%MH: should we introduce the setting with just one control z_n?

\section{Assumptions and Proof of Theorem~1}
This section aims at presenting the theoretical framework from \citet{Karmakar:17} and \citet{Borkar:08} that we want to apply to prove Theorem~\ref{th:convergence}.
We formulate the convergence result Theorem~\ref{th:karmakar} and the assumptions (A1)--(A7) that we need to ensure in order to get there. Then we discuss how it can be applied to our setting.
%The next paragraphs give the assumptions that we have to make for Theorem~\ref{th:convergence}
%in order to fulfill assumptions (A1)--(A7) and to apply Theorem~\ref{th:karmakar}.
%MH: i commented this out, reader will find out anyway. can of course also leave it if you want.
%These paragraphs are:
%"Assumption (A1): Definition of the Markov Control",
%"Assumption (A3): martingale Difference Property and the Probabilistic Setting",
%"Assumption (A2) and Assumption (A6): Ensuring Smooth and Lipschitz $h$ and $f$ and Convergence of the ODEs
%via Assumptions on the Losses", and  
%"Assumption (A5) and Assumption (A7): Transition Kernel and Bounded Iterates."
%SEPP: above is much text with few content: can be much reduced.
%In the corresponding paragraphs we ensure that all assumptions (A1)--(A7) of Theorem~\ref{th:karmakar} are
%fulfilled either by showing properties or making assumptions on our setting.
%Then we prove Theorem~\ref{th:convergence} using the assumptions of the paragraphs.
%In the final Section we show that 
%finite greediness is sufficient to allow to converge to the optimal policy.
%Our convergence proofs show convergence but not necessarily to the optimal policy,
%but convergence to the optimal policy should be possible.

\subsection{The Stochastic Approximation Theory: Borkar and Karmakar \& Bhatnagar.} 
\label{borkarsetting}

%Here we summarize the required background from stochastic approximation theory \citep{Karmakar:17,Borkar:08}
%and ODE theory \citep{Teschl:12book}. 
%The aim is to give an overview that suffices to understand the main proof ideas for Theorem~\ref{th:convergence}. Further technical details can be found in the supplements or the original papers.
For this section we use the formulations in \citet{Heusel:17,Karmakar:17}.
Stochastic approximation algorithms are iterative procedures 
to find stationary points (minimum, maximum, saddle point) of
functions when only noisy observations are provided. 
We use two time-scale stochastic approximation algorithms, 
i.e.\ two coupled iterations moving at different speeds. 
Convergence of these interwoven iterates can be ensured by assuming that one
step size is considerably smaller than the other. 
The slower iterate is assumed to be
slow enough to allow the fast iterate to converge while simultaneously being perturbed
by the slower. The perturbations of the slower should be small
enough to ensure convergence of the faster.
The iterates map at time step $n\geq 0$ the fast variable $\Bom_n \in \dR^k$ and the slow
variable $\Bth_n \in \dR^m$ to their new values:
%\begin{align}
%\label{iter1}
% \Bth_{n+1}  \ &=  \  \Bth_n  \ - \  a(n) \ (h(\Bth_n, \Bom_n,
%               (\Bz_1)_n ) \  +  \ (\Bm_1)_n ) \ ,\\
%\label{iter2}
% \Bom_{n+1}  \ &=  \ \Bom_n   \ - \  b(n) \ (f(\Bth_n, \Bom_n,
%              (\Bz_2)_n )  \ + \ (\Bm_2)_n) \ ,
%\end{align}
\begin{align}
\label{iter1}
 \Bth_{n+1}  \ &=  \  \Bth_n  \ - \  a(n) \ (h(\Bth_n, \Bom_n,
               \Bz_n ) \  +  \ (\Bm_1)_n ) \ ,\\
\label{iter2}
 \Bom_{n+1}  \ &=  \ \Bom_n   \ - \  b(n) \ (f(\Bth_n, \Bom_n,
               \Bz_n )  \ + \ (\Bm_2)_n) \ ,
\end{align}
where:
\begin{itemize}[topsep=-5pt,itemsep=1pt,partopsep=0pt, parsep=0pt,wide, labelwidth=!, labelindent=0pt]
\item $h(.)\in \dR^m$ and $f(.)\in \dR^k$ are 
mappings for  Eq.~\eqref{iter1} and Eq.~\eqref{iter2}, respectively.
\item $a(n)$ and $b(n)$ are
step sizes for Eq.~\eqref{iter1} and Eq.~\eqref{iter2}, respectively. 
\item $(\Bm_1)_n$ and $(\Bm_2)_n$ are martingale difference sequences for Eq.~\eqref{iter1} and Eq.~\eqref{iter2}, respectively.
%\item $(\Bz_1)_n$ and $(\Bz_2)_n$ denote Markov control processes for Eq.~\eqref{iter1} and Eq.~\eqref{iter2}, respectively.
\item $\Bz_n$ denotes the common  Markov control process for Eq.~\eqref{iter1} and Eq.~\eqref{iter2}.
\end{itemize}
We assume that all the random variables are defined 
on a common probability space $(\Omega,\mathfrak{A},P)$ with associated sigma algebra $\mathfrak{A}$ and probability measure $P$. Let us continue with an informal summary of the assumptions needed to ensure convergence of \eqref{iter1}--\eqref{iter2}. More precise technical details can be found in Section \ref{ch:precise} in the appendix and in \citet{Karmakar:17}.
\begin{enumerate}[wide, labelwidth=!, labelindent=0pt,label=\textbf{\large (A\arabic*)}]
\item \textit{Assumptions on the controlled Markov processes:} 
%For $i=1,2$, the controlled Markov processes
%$(\Bz_i)_n\}$ take values in a compact metric space $S^{(i)}$.
%Both processes are controlled by the iterate sequences $\{\Bth_n\}$
%and $\{\Bom_n\}$ and $\{(\Bz_i)_n\}$ are additionally
%controlled by a random process $\{(\Ba_i)_n\}$ taking values in a
%compact metric space $U^{(i)}$.
%The dynamics wrt.\ $n$ is specified by a transition kernel. Be aware that the setting discussed here can in general be different than the one introduced in the MDP setting.
$\Bz_n$ takes values in a compact metric space $S$.
It is controlled by the iterate sequences $\Bth_n$
and $\Bom_n$ and additionally
by a random process $\Ba_n$ taking values in a
compact metric space $W$.
The dynamics wrt.\ $n$ are specified by a transition kernel. Be aware that this control setting can in general be different than the already introduced MDP setting.
\item \textit{Assumptions on the update functions:}
$f$, and $h$ are jointly continuous as well as Lipschitz in 
their first two arguments, and uniformly w.r.t.\ the third.
\item \textit{Assumptions on the additive noise:}
For $i=1,2$ the $(\Bm_i)_n$ are martingale difference sequences with bounded second moments.
\item 
\textit{Assumptions on the learning rates:}
Informally, the sums of the positive $a(n)$ and $b(n)$ diverge, while
  their squared sums converge. $a(n)$ goes to zero faster than $b(n)$. 
%These assumptions are standard in the literature.
\item \textit{Assumptions on the transition kernels:}
The transition kernels of $\Bz_n$ are continuous wrt. the topology of weak convergence of probability measures.
\item \textit{Assumptions on the associated ODEs:}
We consider occupation measures
which intuitively give for 
the controlled Markov process the probability or 
density to observe a particular
state-action pair from $S \times W$ 
for given $\Bth$ and $\Bom$ and a given control. %policy $\pi$. 
A precise definition of these occupation measures can be found e.g.\ 
on page~68 of \citet{Borkar:08} or page~5 in \citet{Karmakar:17}.
We need the following assumptions:
\begin{itemize}[topsep=-5pt,itemsep=1pt,partopsep=0pt, parsep=0pt,wide, labelwidth=!, labelindent=0pt]
\item We assume that there exists only one such ergodic occupation measure for $\Bz_n$ on $S \times W$, denoted by $\Gamma_{\Bth,\Bom}$.
%for the prescribed $\Bth$ and $\Bom$ 
A main reason for assuming uniqueness is that it enables us to deal with ODEs instead of differential inclusions.
Moreover, set 
$\tilde{f}(\Bth, \Bom) \ = \ \int f(\Bth,\Bom,\Bz) \ \Gamma_{\Bth,\Bom}(\Rd\Bz, W)$.
\item For $ \Bth \in \dR^m$, the ODE
$
\dot{\Bom}(t) \ = \ \tilde{f}(\Bth,\Bom(t)) 
$
has a unique asymptotically stable equilibrium $\Bla(\Bth)$ with attractor set $B_{\Bth}$ 
such that $\Bla :  \dR^m \to \dR^k$ is a Lipschitz map with global Lipschitz constant.
\item The Lyapunov function $V(\Bth,.)$ associated to $\Bla(\Bth)$ is continuously differentiable.
\item Next define
$
\tilde{h}(\Bth) \ = \ \int h(\Bth,\Bla(\Bth),\Bz) \ \Gamma_{\Bth,\Bla(\Bth)}(\Rd\Bz, W).
$
The ODE
$
\dot{\Bth}(t) \ = \ \tilde{h}(\Bth(t)) 
$
has a global attractor set $A$.
\item For all $\Bth$, with probability 1, $\Bom_n$ for $n\geq 1$ 
belongs to a compact subset $Q_{\Bth}$ of $B_{\Bth}$ ``eventually''.
\end{itemize}
\vspace{2mm}
This assumption is an adapted version of (A6)' of \citet{Karmakar:17} 
to avoid too many technicalities (e.g. \citet{Karmakar:17} uses a different control for each iterate).
\item \textit{Assumption of bounded iterates:} The iterates $\Bth_n$ and $\Bom_n$ are uniformly bounded almost surely.

\end{enumerate}
Convergence for Eq.~\eqref{iter1}--\eqref{iter2} is given by Theorem~1 in \citet{Karmakar:17}:
\begin{theorem}[Karmakar \& Bhatnagar]
\label{th:karmakar}
Under the assumptions (A1)--(A7), 
%that are given informally below and precisely in \citep{Karmakar:17}, 
the iterates
Eq.~\eqref{iter1} and Eq.~\eqref{iter2} converge:
$
(\Bth_n, \Bom_n) \ \to \ \cup_{\Bth^* \in A}(\Bth^*, \Bla(\Bth^*))
\ \ \text{a.s.} \quad \text{as} \ n \ \to \ \infty \ .
$ 
%MH: can also remove the next line, A is now properly introduced
%(Recall: $A$ is a global attractor associated to $\tilde{h}$).
\end{theorem}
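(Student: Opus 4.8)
The plan is to follow the two-timescale ODE method, extended to accommodate the controlled Markov noise, exactly as developed in \citet{Borkar:08} and \citet{Karmakar:17}, of which this statement is Theorem~1. The governing principle is the separation of timescales encoded in (A4): because $a(n)/b(n)\to 0$, the slow iterate $\Bth_n$ evolves on a strictly slower clock than the fast iterate $\Bom_n$, so from the vantage point of the $\Bom$-recursion the variable $\Bth$ appears quasi-static. Concretely, I would introduce the two interpolated timelines $t(n)=\sum_{j<n}a(j)$ and $\sigma(n)=\sum_{j<n}b(j)$, form the piecewise-linear interpolants of the iterates, show that these interpolants are asymptotic pseudo-trajectories of the two associated ODEs, and then invoke the attractor theory for such trajectories (as in \citet{Borkar:08}) to pin down the limit sets.

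First I would analyze the fast recursion \eqref{iter2} on its own timescale $\sigma(\cdot)$ with $\Bth$ held frozen. Two sources of non-drift error must be shown asymptotically negligible. The martingale differences $(\Bm_2)_n$ are controlled by (A3): the square-summability of $b(n)$ from (A4) together with bounded second moments makes $\sum_{j}b(j)(\Bm_2)_j$ an $L^2$-bounded martingale, hence a.s.\ convergent, so its tail tends to zero. The Markov noise $\Bz_n$ is handled by the occupation-measure machinery: using the continuity of the transition kernels (A5) and the uniqueness of the ergodic occupation measure $\Gamma_{\Bth,\Bom}$ from (A6), the empirical occupation measures along a trajectory converge to $\Gamma_{\Bth,\Bom}$, so the effective drift seen by the fast iterate is the averaged field $\tilde{f}(\Bth,\Bom)=\int f(\Bth,\Bom,\Bz)\,\Gamma_{\Bth,\Bom}(\Rd\Bz,W)$. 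Combined with the Lipschitz/continuity bounds (A2) and the almost-sure boundedness (A7), this identifies the limiting ODE as $\dot{\Bom}(t)=\tilde{f}(\Bth,\Bom(t))$; its asymptotic stability at $\Bla(\Bth)$ then yields $\|\Bom_n-\Bla(\Bth_n)\|\to 0$ a.s., with the eventual-compactness clause of (A6) guaranteeing the trajectory stays inside the basin $B_{\Bth}$.

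Next I would turn to the slow recursion \eqref{iter1}. Having established that the fast variable tracks $\Bla(\Bth_n)$, I would substitute this into the slow drift and again average out the Markov and martingale noise, now obtaining the effective field $\tilde{h}(\Bth)=\int h(\Bth,\Bla(\Bth),\Bz)\,\Gamma_{\Bth,\Bla(\Bth)}(\Rd\Bz,W)$. The global Lipschitz continuity of $\Bla$ from (A6) is what makes this substitution legitimate: the perturbation incurred by replacing $\Bom_n$ with $\Bla(\Bth_n)$ is of the same order as the tracking error and hence vanishes. The interpolated slow iterate is then an asymptotic pseudo-trajectory of $\dot{\Bth}(t)=\tilde{h}(\Bth(t))$, whose global attractor is $A$ by (A6), giving $\Bth_n\to A$ a.s. Combining the two conclusions with the continuity of $\Bla$ yields $(\Bth_n,\Bom_n)\to\cup_{\Bth^*\in A}(\Bth^*,\Bla(\Bth^*))$ a.s.

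I expect the main obstacle to be the rigorous treatment of the controlled Markov noise, namely justifying that the empirical occupation measures of $\Bz_n$ converge to the single ergodic occupation measure while the iterates themselves drift. This is precisely where the uniqueness assumption in (A6) is indispensable: without it the natural limit object is a differential inclusion driven by the whole set of ergodic occupation measures rather than a genuine ODE, and the clean single-valued attractor description would be lost. A secondary technical point is securing tightness of the occupation measures and of the interpolated trajectories, which is supplied by the compactness of $S\times W$ in (A1) together with the boundedness in (A7), keeping all limiting objects in compact sets so that the pseudo-trajectory arguments apply.
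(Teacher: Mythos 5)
Your proposal is correct and takes essentially the same route as the paper: the paper does not prove this statement itself but imports it verbatim as Theorem~1 of \citet{Karmakar:17}, and your sketch faithfully reproduces the argument of that reference (and of \citet{Borkar:08}) --- two interpolated timescales, martingale-noise control via square-summability, occupation-measure averaging of the controlled Markov noise with uniqueness yielding an ODE rather than a differential inclusion, the fast iterate tracking $\Bla(\Bth_n)$, and the slow iterate converging to the global attractor $A$.
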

\subsection{Application to Proof of Main Result}
Next we describe how Theorem~\ref{th:karmakar} yields Theorem~\ref{th:convergence} by discussing the validity of (A1)--(A7). We additionally mention details about their concrete realization in the context of PPO and RUDDER. We conclude by a discussion on how we can allow our policies to become sufficiently greedy over time.

%\fbox{MH: would also leave this}
%% This could be an option to remove

\paragraph{(A1) Controlled Markov process for the abstract setting:}
\label{control}
For Eq.~\eqref{lossha} -- \eqref{lossg} we assume to have a controlled process that fulfills the previously discussed requirements for (A1). 
%However, also to not overload notation, we do not distinguish between $(\Bz_i)_n$, $i=1,2$ and will use the same process for both iterates, which we denote by $\Bz_n$.
%This could in principle also be generalized in an obvious way.
\paragraph{(A1)  Controlled Markov process for PPO and RUDDER:}
\label{sec:control_details}
In our applications to RUDDER and PPO, however, the Markov control will have a much simpler form: $\Bz_n$ mainly consists of real sequences which obey the Markov property. Also we do not have any additional control in these situations. 
%If not stated otherwise, by abuse of notation we will 
%in the sequel not distinguish between $\Bz_n$ and $\B_n$. 
More concretely: 
$\Bz_n=((z_1)_n,(z_2)_n,(z_1)_{n-1},(z_2)_{n-1},\Bth_{n-1},\Bom_{n-1})$ with 
$(z_1)_n \in [1,\beta]$ for some $\beta>1$, and $(z_2)_n \in [0,(z_2)_0]$ for some $(z_2)_0>1$.
%and the parameters $\Bth$ and $\Bom$ are assumed to be in a compact set.
$(z_1)_n$ can be defined by $(z_1)_0=1$ 
and $(z_1)_{n+1}=(1-\frac{1}{\beta})(z_1)_n+1$.
It consists of the partial sums of a geometric series converging to $\beta>1$.
For $(z_2)_n$ we can use any sequence satisfying the Markov Property and converging to zero, e.g. $(z_2)_0=C$ 
and $(z_2)_{n+1}= \alpha (z_2)_n$ with $\alpha<1$ or
$(z_2)_{n+1}= \frac{(z_2)_n}{(z_2)_n + \alpha}$ with $1<\alpha$.
$\Bz_n$ then is a time-homogeneous Markov process with unique invariant measure, cf. Section \ref{sec:dettheo1} in the appendix.

Let us now describe the meaning of this process for RUDDER and PPO: 
The component $(z_1)_n$ is used as a slope parameter for the softmax policy and 
goes to a large value $\beta$ to make the policy greedy. The softmax policy is introduced in the following rather abstract way:
For a sufficiently smooth function (deep neural network) $\Bps(s;\Bth)=(\psi^1(s;\Bth), \ldots, \psi^{|\sA|}(s;\Bth))$, a softmax policy $\pi(\Bth,z_1)$ is defined as
\begin{align}
 \label{eq:policy}
  \pi(a^i \mid s;\Bth,z_1) 
  \ &= \ \frac{ \exp(z_1 \ \psi^i(s;\Bth)) }
   {\sum_j \exp(z_1 \ \psi^j(s;\Bth) )} \ .
\end{align}
%Since $z_1$ is bounded, $\pi$ still fulfills our assumptions 
%on the policy function and the associated losses.
%(see Supplement).
For RUDDER and PPO we use $\Bps(s;\Bth)=\hat{q}(s;\Bth_n)$ with $\hat{q}^i(s;\Bth_n)$ approximating
$q^{\pi}(s,a^i)$. 
The component $(z_2)_n$ is used to
weight an additional term in the objective and goes to zero over time.
We require $(z_1)_{n-1}$ and $\Bth_{n-1}$ for the trust-region term $\rho$, 
for which we have to reconstruct the old policy.
Further details, especially concerning $\beta$, can be found in Section \ref{sec:finte_greedy} and Section \ref{sec:fgreedy} in the appendix.

\paragraph{(A3) Martingale Difference Property and the Probabilistic Setting.} 
\label{martingale}

%Here we introduce the required baseline probability space $\Omega$
%and the martingale difference sequences $(\Bm_i)_{n+1}$, $i=1,2$.
Here we describe the sampling process more formally: 
%that induces the randomness, in several steps:
\begin{itemize}[topsep=-5pt,itemsep=1pt,partopsep=0pt, parsep=0pt,wide, labelwidth=!, labelindent=0pt]
\item The baseline probability space is given by 
$\Omega=[0,1]$, $P=\mu$ and $\mathfrak{A}=\mathfrak{B}([0,1])$, with $\mu$ denoting 
the Lebesgue measure and 
$\mathfrak{B}([0,1])$ the Borel $\sigma$-algebra on $[0,1]$.
\item Next we introduce the set of all trajectories obtained by following $\pi$ as
$
\tilde{\Omega}_{\pi} = \left\{ \tau=(s,a)_{0:T} | \tau \text{ is chosen wrt. } \pi, S_0 = s_0, A_0 = a_0 \right\}.
$
Its power set serves as related $\sigma$-algebra $\tilde{\mathfrak{A}}_{\pi}$.
%\fbox{LG: %$\tilde{\mathfrak{A}_{\pi}}$ instead?}.
%MH: Changed it, thanks!
\item We endow $\tilde{\mathfrak{A}}_{\pi}$ with 
a probability measure:  %$\tilde{P_{\pi}}$:
$
 \tilde{P_{\pi}}(\tau) \ = \ \prod_{t=1}^{T} p(s_t \mid s_{t-1},a_{t-1}) \  \pi(a_t \mid s_t),
$
which computes the probability of choosing a sequence $\tau$ 
with starting point $(s_0,a_0)$.
$\tilde{\mathfrak{A}_{\pi}}$ can be ordered according to the magnitude of the values of its events on $P_{\pi}$. We denote this ordering by $\le$.
\item We define $S_{\pi}: \Omega \to \tilde{\Omega}_{\pi}$ as
$ S_{\pi}:
x \mapsto \argmax_{\tau \in \tilde{\Omega}_{\pi}} \left\{ \sum_{\eta\le \tau} \tilde{P_{\pi}}(\eta) \le x \right\}.
$
This map is well defined and measurable and 
it describes how to get one sample from 
a multinomial distribution with probabilities $\tilde{P_{\pi}}(\tau)$, 
where $\tau \in \tilde{\Omega}_{\pi}$.
%(see Section~S3 in the Supplements for more details).
%SEPP control whether this still holds: refer to supplement
%Here we only apply the inverse sampling method to our setting. 
%LG: Changed it from S2 to S3.
\item
Now we are in the position to describe the sampling process. As mentioned already in the beginning, we use an online update, i.e.\ we introduce functions $\hat{h}$ and $\hat{f}$, where $\hat{h}$
approximates $h$ by using one sample trajectory instead of the expectation, the same goes for $\hat{f}$. 
%For each time-step $n$, the sample for $\hat{h}$ is chosen according to 
%the policy $\pi(\Bth_n, \Bz_n)$  for Eq.~\eqref{lossha} and
%according to the behavioral policy $\breve{\pi}$ for Eq.~\eqref{losshb}, 
%whereas for $\hat{f}$ this is done according to $\pi(\Bth_n, \Bz_n)$.
More formally, 
for $f$ we define $\hat{f}(\Bth_n,\Bom_n,\Bz_n): [0,1] \to \mathbb{R}^k$ as
$
x \mapsto S_{\pi(\Bth_n, \Bz_n)}(x)=\tau 
\mapsto  \nabla_{\Bom_n}  \Phi(g(\tau ; \Bom_n, \Bz_n) ,\tau,\Bth_n, \Bom_n,\Bz_n).
$
\item Finally we can define the martingale errors as ${(\Bm_1)_{n+1} = \hat{h}(\Bth_n, \Bom_n,\Bz_n) -  h(\Bth_n, \Bom_n,\Bz_n)}$ and ${(\Bm_2)_{n+1} = \hat{f}(\Bth_n, \Bom_n, \Bz_n) - f(\Bth_n, \Bom_n, \Bz_n)}$.
%since $\sigma(\Bom_n)$ and $\sigma(\Bth_n)$ already describe $\tilde{\mathcal{A}}_{\pi_{\Bth_n}}$, and thus:
%\[
%\EXP[M^{(1)}_{n+1}|\mathcal{F}_n]=\EXP[\hat{f}(\Bth_n, \Bom_n, \Bz_n)|\mathcal{F}_n]-\tilde{f(\Bth_n, %\Bom_n, \Bz_n)}=0.
%\]
\end{itemize}
\vspace{1mm}
Further details (regarding the sampling process and  the bounds for the second moments) can be found in Sections \ref{sec:sampling} and \ref{sec:dettheo1} in the appendix.
\paragraph{(A2) and (A6) Smoothness for $h$ and $f$ and Stability of ODEs
via Assumptions on Losses.}
\label{assumptions}
%\fbox{MH: would also not move this to appendix}
%To apply Theorem~\ref{th:karmakar}, 
We make the following assumptions on 
the loss functions of Eq.~\eqref{lossha} (or Eq.~\eqref{losshb}) and Eq.~\eqref{lossg}:
\begin{enumerate}[wide, labelwidth=!, labelindent=0pt,label=\textbf{(L\arabic*)}]
\item  $\pi$, $g$, $\Phi$ and $\phi$ all have compact support and 
are at least three times continuously differentiable wrt.\ their parameters $\Bth$ and $\Bom$.
\item For each fixed $\Bth$ all critical points of $L_g(\Bth,\Bom)$ are isolated local minima and there are only finitely many. The local minima $\{ \lambda_i(\Bth) \}_{i=1}^{k(\Bth)}$ of $L_g(\Bth,.)$ can be expressed locally as at least twice continuously differentiable 
functions with associated domains of definitions $\{ V_{\lambda_i(\Bth)} \}_{i=1}^{k(\Bth)}$.
\item Locally in $V_{\lambda_i(\Bth)}$, 
$L_h(\Bth,\lambda_i(\Bth))$ has only one local minimum. 
\end{enumerate}

%Next we give some remarks concerning the practical relevance 
%of our assumptions as well as their role in the proof of our result:
Some remarks concerning these assumptions are in order:
\begin{itemize}[topsep=-5pt,itemsep=1pt,partopsep=0pt, parsep=0pt,wide, labelwidth=!, labelindent=0pt]
    \item \textit{Comment on (L1)}: 
    The parameter space of networks can be assumed to be bounded in practice.
  %  therefore, also compact.
    \item \textit{Comment on (L2)}: 
    For each starting point $(\Bth_0, \Bom_0)$ we can find a neighborhood $U_{\Bth_0}(\Bom_0)$ 
    that connects $\Bom_0$ with 
    a local minimum $\lambda_i(\Bth_0)$ of $L_g(\Bth_0,\cdot)$, so that $U_{\Bth_0}(\Bom_0)$ contains no further critical points, e.g. a small neighborhood around the steepest descent path on the loss surface of $L_g(\Bth_0,\cdot)$ starting at $\Bom_0$.
    %except if it traverses saddles, 
    %where $\lambda_i(\Bth_0)$ is the only local minimum.
   Next we apply the implicit function theorem (IFT)
    to find a neighborhood $V_0$ around $\Bth_0$, 
    such that $\lambda_i(\Bth)$ is twice continuously differentiable there.
    The IFT can be applied to $f(\Bth,\cdot)=\nabla_{\Bom}L_g(\Bth,\cdot)=0$, 
    since the associated Hessian is positive definite and thus invertible. 
    It can even be shown that it is twice continuously differentiable, 
    using analytic versions of the IFT.
    \item \textit{Comment on (L3)}: 
    In a similar vein, for each $\Bth \in V_0$ we can construct neighborhoods $U_{\Bth}(\Bom_0)$ around $\Bom_0$ with $\lambda_i(\Bth)$ 
    as unique associated local minimum (we may have to shrink $V_0$).
%    \item \textit{Comment on required neighborhoods for Theorem~\ref{th:convergence}}:
    Define $\cup_{\Bth \in V_0}(\{ \Bth \} \times U_{\Bth}(\Bom_0))=V_0 \times U_0$. 
    \item \textit{Comment on compatibility with global setting:} 
    By using a suitable regularization (e.g. weight decay) for the networks, we can assume that 
    the algorithm Eq.~\eqref{iter1} and Eq.~\eqref{iter2} 
    always stays in $V_0 \times U_0$. 
    This heuristically justifies that for $(\Bth_0, \Bom_0)$ 
    we localize to $V_0 \times U_0$.
    %and are able to apply the results from stochastic approximation theory.
    \item \textit{Comment on drawbacks of assumptions:} 
    A completely rigorous justification of this argument %that does not require the localization argument and regularization, 
    would require a more thorough analysis of SGD, 
    which would of course be a very interesting future research direction. 
    %For example, it would be very convenient 
    %to have sufficient smoothness conditions on $\tilde{f}$ and $\tilde{h}$ 
    %that guarantee the fifth bullet point of (A6), which is usually hard to verify. 
    For SGD with one time scale, a result in this direction can be found in \citet{Mertikopoulos:20}. It would be interesting to extend it to two timescales.
   % Results in this direction, which are related 
  %  to the so called lock-in probability, at least for one time-scale, 
  %  can be found e.g.\ in \citep{Karmakar:16}.
    \item \textit{Comment on requirements for critical points:} 
    It is also a widely accepted (but yet unproven) conjecture
    that the probability of ending in a poor local minimum is very small 
    for sufficiently large networks, see e.g.\ \citet{Choromanska:15,Kawaguchi:16,Kawaguchi:17,Kawaguchi:19,Kawaguchi:19nc}. 
    Thus, we can ensure that Eq.~\eqref{iter1} and Eq.~\eqref{iter2} 
    really converges to a useful quantity (a high quality local minimum), 
    if our networks are large enough. 
    %MH: include this maybe
    %It is also safe to exclude saddles, as SGD is known avoids strict saddle points/manifolds
    %ith probability 1 under weak smoothness assumptions, cf. e.g. \cite{Mertikopoulos:20}.
\end{itemize}
Using these smoothness assumptions, it is not hard to ensure the required properties in (A2) and (A6) by relating the analysis of the loss surfaces of $L_g$ and $L_h$ to a stability analysis of the corresponding gradient systems. Further technical details can be found in Section \ref{sec:dettheo1} in the appendix.

\paragraph{(A5) and (A7) Transition Kernel and Bounded Iterates.}
The transition kernel is continuous (c.f. Section \ref{sec:dettheo1} in the appendix).
Boundedness of $\Bth_n$ and $\Bom_n$ 
is achieved by weight decay terms in practice.

{\bf Proof of Theorem~\ref{th:convergence}.}
\vspace{-0.5cm}
\begin{proof}
In the previous paragraphs %(which were named according to the corresponding assumptions (A1)-(A7))
%MH: commented this out, can, however, also leave it if you want
%"Assumption (A1): Definition of the Markov Control",
%"Assumption (A3): martingale Difference Property and the Probabilistic Setting",
%"Assumption (A2) and Assumption (A6): Ensuring Smooth and Lipschitz $h$ and $f$ and Convergence of the ODEs
%via Assumptions on the Losses", and  
%"Assumption (A5) and Assumption (A7): Transition Kernel and Bounded Iterates."
we discussed how the assumptions of Theorem~\ref{th:karmakar} can be fulfilled. 
%SEPP previous list can be written more compact.
%Either by showing properties or making assumptions on our setting.
%Therefore we apply Theorem~\ref{th:karmakar} to 
%obtain the statements of Theorem~\ref{th:convergence}.
\end{proof}

\vspace{-1.0cm}

%MH: better embedding in the text, needs rewriting!
\paragraph{Finite Greediness is Sufficient to Converge to the Optimal Policy.}
\label{sec:finte_greedy}
Regularization terms are weighted by $(z_2)_n$ which converges to zero,
therefore the optimal policies are the same as without the regularization.
There exists an optimal policy $\pi^*$ that is 
deterministic according to Proposition~4.4.3 in \citet{Puterman:05}.
We want to ensure via a parameter $(z_1)_n$ that the policy becomes more 
greedy during learning. 
If the policy is not greedy enough, estimates of the action-value 
or the advantage function may misguide the algorithm and the
optimal policy is not found. For example, huge negative rewards if
not executing the optimal actions may avoid convergence to
the optimal policy if the policy is not greedy enough.
$(z_1)_n$ directly enters the policy according to Eq.~\eqref{eq:policy}.
We show that we can estimate how large $(z_1)_n$ must become in order
to ensure that $Q$-value and policy gradient methods converge to an
optimal policy, 
if it is the local minimum of the loss function (we cannot ensure this). 
For policy gradients, the optimal actions receive always the
largest gradient and the policy converges to the optimal policy. 
The required greediness will be measured by the parameter $\beta>1$.
{\em In practical applications we know that $\beta$ exists but do not
know its value, since it depends on
characteristics of the task and the optimal $Q$-values.}
For a more formal treatment c.f. Section \ref{sec:fgreedy} in the appendix, especially Lemma \ref{lem:approx}.

\paragraph{Conclusions and Outlook} 
We showed local convergence of an abstract actor-critic setting and applied it to a version of PPO and RUDDER under practical assumptions.
%We have proven the convergence of a version PPO and RUDDER under commonly used assumptions.
We intend to apply our results to similar practically relevant settings, e.g. the PPO algorithm discussed in \citet{Schulman:17}.
A further future direction is to guarantee convergence to
an optimal policy.
% Just added this, if it's ok for you.
It would also be interesting to relax some of the required assumptions on the loss functions (e.g. by extending the techniques in \citet{Mertikopoulos:20} to two timescales) or elaborate on convergence rates.

\subsubsection*{Acknowledgments}
The ELLIS Unit Linz, the LIT AI Lab, the Institute for Machine Learning, are supported by the Federal
State Upper Austria. IARAI is supported by Here Technologies. We thank the projects AI-MOTION
(LIT-2018-6-YOU-212), DeepToxGen (LIT-2017-3-YOU-003), AI-SNN (LIT-2018-6-YOU-214),
DeepFlood (LIT-2019-8-YOU-213), Medical Cognitive Computing Center (MC3), PRIMAL (FFG873979), S3AI (FFG-872172), DL for granular flow (FFG-871302), ELISE (H2020-ICT-2019-3 ID:
951847), AIDD (MSCA-ITN-2020 ID: 956832). We thank Janssen Pharmaceutica, UCB Biopharma
SRL, Merck Healthcare KGaA, Audi.JKU Deep Learning Center, TGW LOGISTICS GROUP
GMBH, Silicon Austria Labs (SAL), FILL Gesellschaft mbH, Anyline GmbH, Google Brain, ZF
Friedrichshafen AG, Robert Bosch GmbH, Software Competence Center Hagenberg GmbH, TÜV
Austria, and the NVIDIA Corporation.

\bibliography{bib}
\bibliographystyle{iclr2021_conference}

\clearpage
\appendix
\section{Appendix}

\subsection{Further Details on PPO} \label{sec:ppo}
%\lipsum[1-6]
Here we describe the minimization problem for the PPO setup in a more detailed way by including the exact expression for the gradients of the respective loss functions:
%For h: a suggestion how to write this
% Just a small issue: I think \hat(q) and \hat(v) have to depend on \omega, right?
\begin{align} 
 \label{lossh2a}
    &\rL_h(\Bth_n, \Bom_n, \Bz_n)  \ =  \  \EXP_{\tau \sim \pi(\Bth_n,\Bz_n) } 
    \left[ - \ G_0 \ + \ (z_2)_n \ \rho(\tau,\Bth_n,\Bz_n) \right]  \ , \\
     \nonumber
     &h(\Bth_n, \Bom_n, \Bz_n) \ =  
    \EXP_{\tau \sim \pi(\Bth_n,\Bz_n) } 
    \left[ - \ \sum_{t=0}^T \nabla_{\Bth} \log \pi(a_t \mid s_t ; \Bth_n,\Bz_n) \ 
    \left( \hat{q}^{\pi}(s_{t},a_{t};\Bom_n) \ - \ \hat{v}^\pi(s_t;\Bom_n) \right) \right. \\ \nonumber
     &\left. + \ (z_2)_n \  \sum_{t=0}^T \nabla_{\Bth_n} \log \pi(a_t \mid s_t ; \Bth_n,\Bz_n) 
     \ \rho(\tau,\Bth_n,\Bz_n)
     \ + \ (z_2)_n  \ \nabla_{\Bth_n}  \rho(\tau,\Bth_n,\Bz_n) \right] \ , \\
 \label{lossg2a}
 % Here we do SARSA
     &\rL^\mathrm{TD}_g(\Bth_n, \Bom_n, \Bz_n) \ = 
     \EXP_{\tau \sim \pi(\Bth_n,\Bz_n)} \left[\frac{1}{2} \
      \sum_{t=0}^{T} \big( \delta^{\mathrm{TD}}(t) \big)^2 \right]  \ , \\
    &f^\mathrm{TD}(\Bth_n, \Bom_n, \Bz_n) \ =       
    \EXP_{\tau \sim \pi(\Bth_n,\Bz_n)} \left[
     - \sum_{t=0}^{T}   \delta^{\mathrm{TD}}(t) \ \nabla_{\Bom_n}  \hat{q}^{\pi}(s_t,a_t;\Bom_n) \right]  \ , \\
 \label{lossg2MCa}
     &\rL^\mathrm{MC}_g(\Bth_n, \Bom_n, \Bz_n) \ = 
     \EXP_{\tau \sim \pi(\Bth_n,\Bz_n)} \left[\frac{1}{2} \
      \sum_{t=0}^{T} \bigg( G_t \ - \ \hat{q}^{\pi}(s_t,a_t;\Bom_n) \bigg)^2 \right]  \ , \\
      \nonumber 
    &f^\mathrm{MC}(\Bth_n, \Bom_n, \Bz_n) \ = \  \EXP_{\tau \sim \pi(\Bth_n,\Bz_n)} \left[
      -\sum_{t=0}^{T} \bigg(  G_t \ - \ \hat{q}^{\pi}(s_t,a_t;\Bom_n) \bigg) \ \nabla_{\Bom_n}  \hat{q}^{\pi}(s_t,a_t;\Bom_n) \right]  \ , \\
\label{iter1PPOa}
 &\Bth_{n+1}  \  =  \ \Bth_n  \ -  \ a(n)  \ \hat{h} (\Bth_n, \Bom_n, \Bz_n) \ , \quad
 \Bom_{n+1}  \ =  \ \Bom_n  \  - \ b(n) \ \hat{f}(\Bth_n, \Bom_n, \Bz_n)  \ ,
\end{align}

\subsection{Further details on RUDDER} \label{sec:rudder}
%\lipsum[1-6]
In a similar vein we present the minimization problem of RUDDER in more detail:
\begin{align} \label{lossh1a}
    &\rL_h(\Bth_n, \Bom_n,\Bz_n) \ = \
    \EXP_{\tau \sim \breve{\pi}}
    \left[ \frac{1}{2} \ \sum_{t=0}^{T} \bigg( R_{t+1}(\tau ; \Bom_n)  -   \hat{q}(s_t, a_t; \Bth_n)\bigg)^2 \ + \ (z_2)_n \ \rho_{\Bth}(\tau,\Bth_n,\Bz_n) \right] \\ 
    &h(\Bth_n, \Bom_n,\Bz_n) \ =  \  
    \EXP_{\tau \sim \breve{\pi}}
    \left[-\sum_{t=0}^{T} \bigg( R_{t+1}(\tau ; \Bom_n)  -   \hat{q}(s_t, a_t; \Bth_n)\bigg) \ 
    \nabla_{\Bth} \hat{q}(s_t, a_t; \Bth_n) \right. \\ \nonumber
     &\Bigg.\  \qquad + (z_2)_n \ \nabla_{\Bth} \rho_{\Bth}(\tau,\Bth_n,\Bz_n) \ \Bigg]   \\ 
      \label{lossg1a}
     &\rL_g(\Bth_n, \Bom_n, \Bz_n) \ = \
     \EXP_{\tau \sim \pi(\Bth_n,\Bz_n)} \left[\frac{1}{2} \
     \bigg( \sum_{t=0}^{T} \tilde{R}_{t+1} \ - \   g( \tau ;
     \Bom_n ) \bigg)^2  \ + \ (z_2)_n \ \rho_{\Bom}(\tau,\Bth_n,\Bz_n) \right]  \\ 
    &f(\Bth_n, \Bom_n, \Bz_n) \ = \ \EXP_{\tau \sim \pi(\Bth_n,\Bz_n)} \left[
     -\bigg( \sum_{t=0}^{T} \tilde{R}_{t+1} \ - \   g( \tau ;
     \Bom_n ) \bigg) \ \nabla_{\Bom} g( \tau ;  \Bom_n ) \  \right. \\
     & \nonumber \qquad \Bigg. +(z_2)_n \ \nabla_{\Bom} \rho_{\Bom}(\tau,\Bth_n,\Bz_n) \Bigg] \ , \\
\label{iter1RUDDERa}
 &\Bth_{n+1}  \ =  \ \Bth_n  \ -  \ a(n)  \ \hat{h} (\Bth_n, \Bom_n, \Bz_n) \ , \quad
 \Bom_{n+1}  \ =  \ \Bom_n  \  -  \ b(n) \ \hat{f}(\Bth_n, \Bom_n, \Bz_n)  \ ,
\end{align}
    
\subsection{Causality and Reward-To-Go} \label{causality}
%\lipsum[1-6]
% JAM -> We need to say that this is assuming an MDP, otherwise probabilities of actions are not independent and cannot be decomposed into a product. p_\pi(\tau) = \prod \pi(a_t \mid s_t) if we assume MDP
% MH: dont we assume this anyway, we introduced the MDP framework in main paper, or is sthg missing?
This section is meant to provide the reader with more details concerning the causality assumption that leads to the formula for $h$ in Eq.~\eqref{lossh2a} for PPO.
We can derive a
formulation of the policy gradient with reward-to-go.
For ease of notation, instead of using $\tilde{P_{\pi}}(\tau)$ as in previous sections, we here denote the probability of state-action sequence
$\tau=\tau_{0,T}=(s_0,s_0,s_1,a_1,\ldots,s_T,a_T)$
with policy $\pi$ as
\begin{align}
 &  p(\tau) \ = \ p(s_0) \ \pi(a_0 \mid s_0) \
 \prod_{t=1}^{T} p(s_t \mid s_{t-1},a_{t-1}) \  \pi(a_t \mid s_t)
 \\\nonumber
 &= \  p(s_0) \ \prod_{t=1}^{T} p(s_t \mid s_{t-1},a_{t-1}) \
  \prod_{t=0}^{T}  \pi(a_t \mid s_t) \ .
\end{align}
The probability of state-action sequence
$\tau_{0,t}=(s_0,s_0,s_1,a_1,\ldots,s_t,a_t)$
with policy $\pi$ is
\begin{align}
 &  p(\tau_{0,t}) \ = \ p(s_0) \ \pi(a_0 \mid s_0) \
 \prod_{k=1}^{t} p(s_k \mid s_{k-1},a_{k-1}) \  \pi(a_k \mid s_k)
 \\\nonumber
 &= \  p(s_0) \ \prod_{k=1}^{t} p(s_k \mid s_{k-1},a_{k-1}) \
  \prod_{k=0}^{t}  \pi(a_k \mid s_k) \ .
\end{align}
The probability of state-action sequence
$\tau_{t+1,T}=(s_{t+1},a_{t+1},\ldots,s_T,a_T)$
with policy $\pi$ given $( s_t,a_t)$ is
\begin{align}
 &  p(\tau_{t+1,T} \mid s_t,a_t) \ = \ 
 \prod_{k=t+1}^{T} p(s_k \mid s_{k-1},a_{k-1}) \  \pi(a_k \mid s_k)
 \\\nonumber
 &= \  \prod_{k=t+1}^{T} p(s_k \mid s_{k-1},a_{k-1}) \
  \prod_{k=t+1}^{T}  \pi(a_k \mid s_k) \ .
\end{align}

The expectation of $\sum_{t=0}^{T} R_{t+1}$ is
\begin{align}
  & \EXP_{\pi} \left[ \sum_{t=0}^{T} R_{t+1} \right] \ =
  \  \sum_{t=0}^{T} \EXP_{\pi} \left[ R_{t+1}  \right]\ .
\end{align}
With $R_{t+1} \sim p(r_{t+1} \mid s_t,a_t)$, the
random variable $R_{t+1}$ depends only on $(s_t,a_t)$.
We define the expected reward
$\EXP_{r_{t+1}} \left[ R_{t+1} \mid s_t,a_t\right]$
as a function $r(s_t,a_t)$ of  $(s_t,a_t)$:
\begin{align}
    r(s_t,a_t) \ &:= \ \EXP_{r_{t+1}} \left[ R_{t+1} \mid
      s_t,a_t\right] \ = \ \sum_{r_{t+1}} p(r_{t+1} \mid s_t,a_t) \
    r_{t+1}\ .
\end{align}
{\bf Causality.}
We assume that the reward
$R_{t+1}=R(s_t,a_t)  \sim p(r_{t+1} \mid s_t,a_t)$
only depends on the past but not on the future.
The state-action pair $(s_t,a_t)$ is determined by
the past and not by the future. Relevant is only
how likely we observe  $(s_t,a_t)$ and not what we
do afterwards.

Causality is derived from the Markov property of the MDP and means:
\begin{align}
  &\EXP_{\tau \sim \pi} \left[ R_{t+1}  \right]
  \ = \ \EXP_{\tau_{0,t} \sim \pi} \left[ R_{t+1}  \right] \ .
\end{align}
That is

\begin{align}
  &\EXP_{\tau \sim \pi} \left[ R_{t+1}  \right] \ = \
  \sum_{s_1} \sum_{a_1} \sum_{s_2} \sum_{a_2} \ \ldots \
  \sum_{s_T} \sum_{a_T} p(\tau) \ r(s_t,a_t)  \\ \nonumber
  &= \   \sum_{s_1} \sum_{a_1} \sum_{s_2} \sum_{a_2} \ \ldots \
  \sum_{s_T} \sum_{a_T} \ \prod_{l=1}^{T} p(s_l \mid s_{l-1},a_{l-1}) \
  \prod_{l=1}^{T}  \pi(a_l \mid s_l) \ r(s_t,a_t)\\ \nonumber
  &= \   \sum_{s_1} \sum_{a_1} \sum_{s_2} \sum_{a_2} \ \ldots \
  \sum_{s_t} \sum_{a_t} \ \prod_{l=1}^{t} p(s_{l} \mid s_{l-1},a_{l-1}) \
  \prod_{l=1}^{t}  \pi(a_{l} \mid s_{l}) \ r(s_t,a_t)\\ \nonumber
  &~~~\sum_{s_{t+1}} \sum_{a_{t+1}} \sum_{s_{t+2}} \sum_{a_{t+2}} \ \ldots \
  \sum_{s_T} \sum_{a_T} \ \prod_{l=t+1}^{T} p(s_{l} \mid s_{l-1},a_{l-1}) \
  \prod_{l=t+1}^{T}   \pi(a_{l} \mid s_{l})\\ \nonumber
  &= \   \sum_{s_1} \sum_{a_1} \sum_{s_2} \sum_{a_2} \ \ldots \
  \sum_{s_t} \sum_{a_t} \ \prod_{l=1}^{t} p(s_{l} \mid s_{l-1},a_{l-1}) \
  \prod_{l=1}^{t}  \pi(a_{l} \mid s_{l}) \  r(s_t,a_t) \\ \nonumber
  &= \   \EXP_{\tau_{0,t} \sim \pi} \left[ R_{t+1}  \right]    \ .
\end{align}

{\bf Policy Gradient Theorem.}
We now assume that the policy $\pi$ is parametrized by $\Bth$,
that is,  $\pi(a_t \mid s_t) = \pi(a_t \mid s_t ; \Bth)$.
We need the gradient with respect to $\Bth$ of
$\prod_{t=a}^{b}  \pi(a_t \mid s_t)$:
\begin{align}
  & \nabla_{\theta}  \prod_{t=a}^{b}  \pi(a_t \mid s_t ; \Bth) \  = \
  \sum_{s=a}^{b}  \prod_{t=a,t \not= s}^{b}  \pi(a_t \mid s_t ; \Bth)
  \ \nabla_{\theta}  \pi(a_s \mid s_s ; \Bth) \\ \nonumber
  &= \  \prod_{t=a}^{b}  \pi(a_t \mid s_t ; \Bth)  \
  \sum_{s=a}^{b} \frac{ \nabla_{\theta}  \pi(a_s \mid s_s ; \Bth)}
  {\pi(a_s \mid s_s ; \Bth)}\\ \nonumber
  &= \  \prod_{t=a}^{b}  \pi(a_t \mid s_t ; \Bth)  \ \sum_{s=a}^{b}
  \nabla_{\theta}  \log \pi(a_s \mid s_s ; \Bth) \ .
\end{align}
It follows that
\begin{align}
   & \nabla_{\theta}   \EXP_{\pi} \left[  R_{t+1} \right] \ = \
    \EXP_{\pi} \left[ \sum_{s=1}^{t} \nabla_{\theta}  \log \pi(a_s
      \mid s_s ; \Bth)  \  R_{t+1}  \right] \ . 
\end{align}

We only have to consider the {\bf reward to go}.
Since $a_0$ does not depend on $\pi$, we have
$\nabla_{\theta} \EXP_{\pi} \left[R_1 \right]=0$.
Therefore
\begin{align}
    &\nabla_{\theta} \EXP_{\pi} \left[ \sum_{t=0}^{T} R_{t+1} \right] \ = \
      \sum_{t=0}^{T} \nabla_{\theta} \EXP_{\pi} \left[ R_{t+1} \right] \\ \nonumber
   &= \
    \EXP_{\pi} \left[  \sum_{t=1}^{T} \sum_{k=1}^{t} \nabla_{\theta}  \log \pi(a_k
      \mid s_k ; \Bth)  \   R_{t+1}  \right] \\ \nonumber
    &= \ \EXP_{\pi} \left[  \sum_{k=1}^{T} \sum_{t=k}^{T} \nabla_{\theta}  \log \pi(a_k
      \mid s_k ; \Bth)  \   R_{t+1}  \right]\\ \nonumber
    &= \ \EXP_{\pi} \left[  \sum_{k=1}^{T} \nabla_{\theta}  \log \pi(a_k
      \mid s_k ; \Bth)  \  \sum_{t=k}^{T} R_{t+1}  \right]\\ \nonumber
    &= \ \EXP_{\pi} \left[  \sum_{k=1}^{T} \nabla_{\theta}  \log \pi(a_k
      \mid s_k ; \Bth)  \  G_k \right]   \ . 
\end{align}

We can express this by $Q$-values.
\begin{align}
   &\EXP_{\pi} \left[  \sum_{k=1}^{T} \nabla_{\theta}  \log \pi(a_k
      \mid s_k ; \Bth)  \  G_k \right]\\ \nonumber
   &= \  \sum_{k=1}^{T} \EXP_{\pi} \left[  \nabla_{\theta}  \log \pi(a_k
      \mid s_k ; \Bth)  \  G_k \right]\\ \nonumber
   &= \ \sum_{k=1}^{T} \EXP_{\tau_{0,k} \sim \pi} \left[  \nabla_{\theta}  \log \pi(a_k
      \mid s_k ; \Bth)  \ \EXP_{\tau_{k+1,T} \sim \pi} \left[ G_k \mid
     s_k,a_k \right] \right]\\ \nonumber
   &= \ \sum_{k=1}^{T} \EXP_{\tau_{0,k} \sim \pi} \left[  \nabla_{\theta}  \log \pi(a_k
      \mid s_k ; \Bth)  \ q^{\pi}(s_k,a_k)  \right]\\ \nonumber
   &= \ \EXP_{\tau \sim \pi} \left[  \sum_{k=1}^{T}  \nabla_{\theta}  \log \pi(a_k
      \mid s_k ; \Bth)  \ q^{\pi}(s_k,a_k)  \right] \ . 
\end{align}
We have finally:
\begin{align}
    &\nabla_{\theta} \EXP_{\pi} \left[ \sum_{t=0}^{T} R_{t+1} \right] \ = \
   \EXP_{\tau \sim \pi} \left[  \sum_{k=1}^{T}  \nabla_{\theta}  \log \pi(a_k
      \mid s_k ; \Bth)  \ q^{\pi}(s_k,a_k)  \right] \ . 
\end{align}

\subsection{Precise statement of Assumptions} \label{ch:precise}
%\lipsum[1-6]
Here we provide a precise formulation of the assumptions from \citet{Karmakar:17}. The formulation we use here is mostly taken from \citet{Heusel:17}:
\begin{enumerate}[wide, labelwidth=!, labelindent=0pt,label=\textbf{\large (A\arabic*)}]
\item \textit{Assumptions on the controlled Markov processes:} The controlled Markov process
$\Bz$ takes values in a compact metric space $S$.
It is controlled by the iterate sequences $\Bth_n\}$
and  $\Bom_n$ and furthermore $\Bz_n$ by a random process $\Ba_n$ taking values in a
compact metric space $W$.
For $B$ Borel in $S$ the $\Bz_n$ dynamics for $n\geq0$ is determined by a transition kernel $\tilde{p}$:
\begin{align*}
&\rP(\Bz_{n+1} \in B |\Bz_l,
\Ba_l, \Bth_l, \Bom_l, l\leq n) 
= \
\int_{B} \tilde{p}(\Rd \Bz| \Bz_n,
\Ba_n, \Bth_n, \Bom_n).
\end{align*}

\item \textit{Assumptions on the update functions:}
$h :  \dR^{m+k} \times S^{(1)} \to \dR^m$ is  
jointly continuous as well as Lipschitz in 
its first two arguments, and uniformly w.r.t.\ the third. 
This means that for all $ \Bz \in S$:
\begin{align*}
\|h(\Bth, \Bom, \Bz) \ - \ h(\Bth', \Bw',
  \Bz)\| \leq \ L^{(1)} \ (\|\Bth-\Bth'\| + \|\Bom -
  \Bom'\|) \ .
\end{align*}

Similarly for $f$, where the Lipschitz constant is $L^{(2)}$.

\item \textit{Assumptions on the additive noise:}
For $i=1,2$, $\{(\Bm_i)_n\}$ are martingale difference sequences with bounded second moments.
More precisely,
$(\Bm_i)_n$ are martingale difference sequences
w.r.t.\ increasing $\sigma$-fields
\[
\mathfrak{F}_n \ = \ \sigma(\Bth_l, \Bom_l, (\Bm_1)_{l}, (\Bm_2)_{l},
          \Bz_l, l \leq n) ,
\]
satisfying 
$
\rE \left[\|(\Bm_i)_n \|^2 \mid \mathfrak{F}_n \right]
  \ \leq \ B_i
$
for $n \geq 0$ and a given constants $B_i$.

\item 
\textit{Assumptions on the learning rates:} \label{asslr}

\begin{align*}
&\sum_{n} a(n) \ = \ \infty, \quad 
\sum_{n} a^2(n) \ < \ \infty,  \\
&\sum_{n} b(n) \ = \ \infty, \quad 
\sum_{n} b^2(n) \ < \ \infty, \\
\end{align*}
and $a(n) \ = \ \Ro(b(n))$. Furthermore, $a(n), b(n), 
n \geq 0$ are non-increasing. 

\item \textit{Assumptions on the transition kernels:}
The state-action map

\begin{align*}
S \times W \times \dR^{m+k} \ni &(\Bz,\Ba,\Bth,\Bom)  
\mapsto \ \tilde{p}(\Rd\By \mid \Bz, \Ba,
  \Bth, \Bom)
\end{align*}

is continuous (the topology on the spaces of probability measures is induced by weak convergence).

\item \textit{Assumptions on the associated ODEs:}
We consider occupation measures
which intuitively give for 
the controlled Markov process the probability or 
density to observe a particular
state-action pair from $S \times W$ 
for given $\Bth$ and $\Bom$ and a given control. %policy $\pi$. 
A precise definition of these occupation measures can be found e.g.\ 
on page~68 of \citet{Borkar:08} or page~5 in \citet{Karmakar:17}.
We have following assumptions:
\begin{itemize}[topsep=-5pt,itemsep=1pt,partopsep=0pt, parsep=0pt,wide, labelwidth=!, labelindent=0pt]
\item We assume that there exists only one such ergodic occupation measure for $\Bz_n$ on $S \times W$, denoted by $\Gamma_{\Bth,\Bom}$.
%for the prescribed $\Bth$ and $\Bom$ 
A main reason for assuming uniqueness is that it enables us to deal with ODEs instead of differential inclusions.
Moreover, set 
$\tilde{f}(\Bth, \Bom) \ = \ \int f(\Bth,\Bom,\Bz) \ \Gamma_{\Bth,\Bom}(\Rd\Bz, W)$.
\item We assume that for $ \Bth \in \dR^m$, the ODE
$
\dot{\Bom}(t) \ = \ \tilde{f}(\Bth,\Bom(t)) 
$
has a unique asymptotically stable equilibrium $\Bla(\Bth)$ with attractor set $B_{\Bth}$ 
such that $\Bla :  \dR^m \to \dR^k$ is a Lipschitz map with global Lipschitz constant.
\item The Lyapunov function $V(\Bth,.)$ associated to $\Bla(\Bth)$ is continuously differentiable.
\item Next define
$
\tilde{h}(\Bth) \ = \ \int h(\Bth,\Bla(\Bth),\Bz) \ \Gamma_{\Bth,\Bla(\Bth)}(\Rd\Bz, W).
$
We assume that the ODE
$
\dot{\Bth}(t) \ = \ \tilde{h}(\Bth(t)) 
$
has a global attractor set $A$.
\item For all $\Bth$, with probability 1, $\Bom_n$ for $n\geq 1$ 
belongs to a compact subset 
$Q_{\Bth}$
of $B_{\Bth}$ ``eventually''.
\end{itemize}
\vspace{1mm}
This assumption is an adapted version of (A6)' of \citet{Karmakar:17}, 
to avoid too many technicalities (e.g. in \citet{Karmakar:17} two controls are used, which we avoid here to not overload notation).

\item \textit{Assumption of bounded iterates:}
$\sup_n \| \Bth_n \| \ < \ \infty$ and $\sup_n \| \Bom_n \| \ < \ \infty$ a.s.

\end{enumerate}

\subsection{Further Details concerning the Sampling Process} \label{sec:sampling}
%\lipsum[1-6]
Let us formulate the construction of the sampling process in more detail: We introduced the function $S_{\pi}$ in the main paper as follows:
\[
S_{\pi}: \Omega \to \tilde{\Omega}_{\pi},\ x \mapsto \argmax_{\tau \in \tilde{\Omega}_{\pi}} \left\{ \sum_{\eta\le \tau} \tilde{P_{\pi}}(\eta) \le x \right\}.
\]
%We already introduced an ordering  $\le_{\pi}$ on $\tilde{\mathcal{A}_{\pi}}$ according to the magnitude of the values of its events on $P_{\pi}$.
Now $S_{\pi}$ basically divides the interval $[0,1]$ into finitely many disjoint subintervals, such that the $i$-th subinterval $I_i$ maps to the $i$-th element $\tau_i \in\tilde{\Omega}_{\pi}$, 
%if we assume the ordering $\le_{\pi}$ described above, 
and additionally the length of $I_i$ is given by $\tilde{P_{\pi}}(\tau_i)$. $S_{\pi}$ is measurable, because the pre-image of any element of the sigma-algebra $\tilde{\mathfrak{A}_{\pi}}$ wrt. $S_{\pi}$ is just a finite union of subintervals of $[0,1]$, which is clearly contained in the Borel-algebra. Basically $S_{\pi}$ just describes how to get one sample from a multinomial distribution with (finitely many) probabilities $\tilde{P_{\pi}}(\tau)$, where $\tau \in \tilde{\Omega}_{\pi}$. Compare with inverse transform sampling, e.g. Theorem 2.1.10. in \citet{Casella:02} and applications thereof. For the reader's convenience let's briefly recall this important concept here in a formal way:
\begin{lemma}[Inverse transform sampling]
Let $X$ have continuous cumulative distribution $F_X(x)$ and define the random variable $Y$ as $Y=F_{X}(X)$. Then $Y$ is uniformly distributed on $(0,1)$.
\end{lemma}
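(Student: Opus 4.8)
The plan is to compute the cumulative distribution function of $Y$ directly and show it coincides with that of the uniform law on $(0,1)$. First I would fix $y \in (0,1)$ and rewrite $F_Y(y) = \PR(Y \le y) = \PR(F_X(X) \le y)$. The goal of the argument is to convert the event $\{F_X(X) \le y\}$ into an event of the form $\{X \le x^*\}$, whose probability is simply $F_X(x^*)$.

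To do this I would introduce $x^* := \sup\{x : F_X(x) \le y\}$. Since $F_X$ is a distribution function, $F_X(x) \to 0$ as $x \to -\infty$ and $F_X(x) \to 1$ as $x \to +\infty$, so for $y \in (0,1)$ the set $\{x : F_X(x) \le y\}$ is nonempty and bounded above; hence $x^*$ is finite. Because $F_X$ is non-decreasing this set is a down-set, and because $F_X$ is continuous it is closed, so it equals the half-line $(-\infty, x^*]$. The crucial point is that continuity forces $F_X(x^*) = y$: the inequality $F_X(x^*) \le y$ holds because $x^*$ lies in this closed set, while letting $x \downarrow x^*$ over points with $F_X(x) > y$ and invoking continuity gives $F_X(x^*) \ge y$.

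With $x^*$ in hand, monotonicity of $F_X$ yields the set identity $\{F_X(X) \le y\} = \{X \le x^*\}$, and therefore $F_Y(y) = \PR(X \le x^*) = F_X(x^*) = y$. For the boundary cases I would observe that $Y = F_X(X)$ takes values in $[0,1]$, so that $F_Y(y) = 0$ for $y \le 0$ and $F_Y(y) = 1$ for $y \ge 1$. Collecting these facts shows that $F_Y$ is exactly the distribution function of the uniform law on $(0,1)$, which proves the claim.

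I expect the only genuine obstacle to be the situation in which $F_X$ has flat stretches, so that it fails to be strictly increasing and admits no literal inverse. This is precisely where continuity, rather than strict monotonicity, does the work: it guarantees the existence of a point $x^*$ with $F_X(x^*) = y$ and lets me identify the two events above without ever referring to $F_X^{-1}$, thereby closing the argument cleanly.
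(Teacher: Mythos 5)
Your proof is correct. Note that the paper itself does not prove this lemma: it only recalls it for the reader's convenience and defers the proof to the cited reference (Theorem 2.1.10 in Casella and Berger), so there is no in-paper argument to compare against. Your argument is a correct, self-contained version of the standard probability-integral-transform proof: where the reference works with the quantile function $F_X^{-1}(y)=\inf\{x : F_X(x)\ge y\}$, you instead fix $y$ and use the threshold $x^*=\sup\{x : F_X(x)\le y\}$, and you invoke continuity of $F_X$ exactly where it is indispensable — to obtain $F_X(x^*)=y$ and to identify $\{F_X(X)\le y\}$ with $\{X\le x^*\}$ in the presence of flat stretches; this is the same idea in a slightly different guise, and it buys you a proof that never needs a globally defined generalized inverse. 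One cosmetic remark: at $y=0$ the equality $F_Y(0)=0$ does not follow from ``$Y$ takes values in $[0,1]$'' alone, since one must rule out an atom of $Y$ at $0$; it does follow either from right-continuity of $F_Y$ combined with $F_Y(y)=y$ on $(0,1)$, or by observing that $\{x : F_X(x)=0\}$ is a closed down-set whose probability equals $F_X$ evaluated at its supremum, which is $0$.
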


\subsection{Further Details for Proof of Theorem 1} \label{sec:dettheo1}
%\lipsum[1-6]
Here we provide further technical details needed to ensure the assumptions stated before to prove our main theorem Theorem 1. 
\begin{enumerate}[wide, labelwidth=!, labelindent=0pt,label=\textbf{\large (A\arabic*)}]
\item \textit{Assumptions on the controlled Markov processes:}  
%$z_n$ clearly is a time-homogeneous Markov process with $\delta_{\beta}$ as its unique invariant measure. So integrating wrt.\ this invariant measure will in our case just correspond to obtaining the policy $\pi^+$.
Let us start by discussing more details for controlled processes that appear in the PPO and RUDDER setting. Let us focus on the process related to $(z_1)_n$:
Let $\beta>1$ and let the real sequence $z_n$ be defined by $(z_1)_1=1$ and $(z_1)_{n+1}=(1-\frac{1}{\beta})(z_1)_{n}+1$.
The $z_n$'s are nothing more but the partial sums of a geometric series converging to $\beta$.

The sequence $(z_1)_n$ can also be interpreted as a time-homogeneous Markov process $(\Bz_1)_n$ with transition probabilities given by
\begin{equation} \label{def:transprop}
P(z, y)=\delta_{(1-\frac{1}{\beta})z+1},
\end{equation}

where $\delta$ denotes the Dirac measure, and with the compact interval $[1,\beta]$ as its range. We use the standard notation for discrete time Markov processes, described in detail e.g. in \citet{Hairer:18}. Its unique invariant measure is clearly $\delta_{\beta}$.
So integrating wrt.\ this invariant measure will in our case just correspond to taking the limit $(z_1)_n \to \beta$.

\item \textit{$h$ and $f$ are Lipschitz:} By the mean value theorem it is enough to show that the derivatives wrt.\ $\Bth$ and $\Bom$ are bounded uniformly wrt.\  $\Bz$. We only show details for $f$, since for $h$ similar considerations apply. By the explicit formula for $L_g$, we see that $f(\Bth,\Bom,\Bz)$ can be written as:
\begin{align*}
 \sum_{\substack{s_1,..,s_T \\a_1,...,a_T}}  &\prod_{t=1}^{T} p(s_t \mid s_{t-1},a_{t-1}) \pi(a_t \mid s_t, \Bth,\Bz) 
  \nabla_{\Bom}\Phi(g(\tau ; \Bom, \Bz) ,\tau, \Bth, \Bom, \Bz) . 
\end{align*}
The claim can now be readily deduced from the assumptions (L1)--(L3).

\item \textit{Martingale difference property and estimates:} From the results in the main paper on the probabilistic setting, 
$(\Bm_1)_{n+1}$ and $(\Bm_2)_{n+1}$ can easily be seen to be martingale difference sequences wrt. their filtrations $\mathfrak{F}_n$. Indeed, the sigma algebras created by $\Bom_n$ and $\Bth_n$ already describe $\tilde{\mathfrak{A}}_{\pi_{\Bth_n}}$, and thus:
\[
\EXP[(\Bm_i)_{n+1}|\mathfrak{F}_n]=\EXP[\hat{f}(\Bth_n, \Bom_n, \Bz_n)|\mathfrak{F}_n]-\EXP[f(\Bth_n, \Bom_n, \Bz_n)]=0.
\] It remains to show that
$
\EXP[|(\Bm_i)_{n+1}|^2 | \mathfrak{F}_n] \le B_i
$ for $i=1,2.$
This, however, is also clear, since all the involved expressions are bounded uniformly again by the assumptions (L1)--(L3) on the losses (e.g. one can observe this by writing down the involved expressions explicitly as indicated in the previous point (A2) ).

\item \textit{Assumptions on the learning rates:} These standard assumptions are taken for granted.

\item \textit{Transition kernels:} The continuity of the transition kernels is clear from Eq.~\eqref{def:transprop} (continuity is wrt. to the weak topology in the space of probability measures. So in our case, this again boils down to using continuity of the test functions).

\item \textit{Stability properties of the ODEs:} 
\begin{itemize}[topsep=-5pt,itemsep=1pt,partopsep=0pt, parsep=0pt,wide, labelwidth=!, labelindent=0pt]
\item By the explanations for (A1) we mentioned that integrating wrt. the ergodic occupation measure in our case corresponds to taking the limit $\Bz_n \to \Bz$ (since our Markov processes can be interpreted as sequences).
Thus $\tilde{f}(\Bth, \Bom)=f(\Bth,\Bom,\Bz)$. In the sequel we will also use the following abbreviations:
%MH: check consistency of notation for limit of z_n
$f(\Bth,\Bom)=f(\Bth,\Bom,\Bz)$, $h(\Bth,\Bom)=h(\Bth,\Bom,\Bz)$, etc.. Now consider the ODE
\begin{equation} \label{fastode}
  \dot{\Bom}(t)=f(\Bth,\Bom(t)),  
\end{equation}
where $\Bth$ is fixed. Eq.~\eqref{fastode} can be seen as a gradient system for the function $L_g$. By  standard results on gradient systems (cf. e.g. Section 4 in \citet{Absil:06} for a nice summary), which guarantee equivalence between strict local minima of the loss function and asymptotically stable points of the associated gradient system, we can use the assumptions (L1)--(L3) and the remarks thereafter from the main paper to ensure that there exists a unique asymptotically stable equilibrium $\Bla(\Bth)$ of Eq.~\eqref{fastode}.
\item The fact that $\Bla(\Bth)$ is smooth enough can be deduced by the Implicit Function Theorem as discussed in the main paper. 
\item For Eq.~\eqref{fastode} $L_g(\Bth,\Bom)-L_g(\Bth,\Bla(\Bth))$ can be taken as associated Lyapunov function $V_{\Bth}(\Bom)$, and thus $V_{\Bth}(\Bom)$ clearly is differentiable wrt.\ $\Bom$ for any $\Bth$. 
\item The slow ODE
$
  \dot{\Bth}(t)=h(\Bth(t),\Bla(\Bth(t))  
$
also has a unique asymptotically stable fixed point, which again is guaranteed by our assumptions and the standard results on gradient systems. 
\end{itemize}
\item \textit{Assumption of bounded iterates:} This follows from the assumptions on the loss functions.
\end{enumerate}

\subsection{Finite Greediness is sufficient to converge to the optimal policy} \label{sec:fgreedy}
%\lipsum[1-6]

Here we provide details on how the optimal policy can be deduced using only a finite parameter $\beta>1$.
The $Q$-values for policy $\pi$ are:
\begin{align*}
    &q^{\pi}(s_t,a_t) \ = \ \EXP_{\pi}
    \left[  \sum_{\tau=t}^{T} R_{\tau+1} \mid s_t,a_t \right] \ = \ \sum_{\substack{s_t,..,s_T \\a_t,...,a_T}}
   \prod_{\tau=t}^{T-1} p(s_{\tau+1} \mid s_{\tau},a_{\tau}) \
  \prod_{\tau=t}^{T}  \pi(a_{\tau} \mid s_{\tau}) \ \sum_{\tau=t}^{T} R_{\tau+1}  \ .
\end{align*}
The optimal policy $\pi^*$ is known to be 
deterministic $\left(\prod_{t=1}^T \pi^*(a_t\ |\ s_t) \in \{0,1\} \right)$.
Let us assume that the optimal policy is also unique.
Then we are going to show the following result:
\begin{lemma}
\label{lem:approx}

For  $i_{\max}= \arg\max_{i} q^{\pi^*}(s,a^i)$ and $v^{\pi^*}(s) = \max_{i} q^{\pi^*}(s,a^i)$.
We define
\begin{align} 
\label{eq:defeps}
     0 \ &< \  \epsilon \ < \ \min_{s,i\not=i_{\max}} (v^{\pi^*}(s) \  - \ q^{\pi^*}(s,a^i))  \ ,
\end{align}
We assume a function $\psi(s,a^i)$ that defines the actual policy $\pi$ via
\begin{align} 
 \pi(a^i  \mid s; \beta) \ &= \ \frac{\exp(\beta \ \psi(s,a^i) ) }
  {\sum_j \exp(\beta \ \psi(s,a^j) )} \ .
\end{align}
We assume that the function $\psi$ already identified the optimal actions,
which will occur during learning at some time point when the policy is getting more greedy:
\begin{align} 
     0 \ &< \ \delta \ < \ \min_{s,i\not=i_{\max}} (\psi(s,a^{i_{\max}}) \  - \psi(s,a^i) )  \ .
\end{align}
Hence,
\begin{align} 
 \lim_{\beta \to \infty} \pi(a^i  \mid s; \beta) \ &= \ \pi^*(a^i  \mid s) \ .
\end{align}

We assume that
\begin{align} \label{eq:betadef}
 \beta \ > \max \left( \frac{\log(\ABS{\sA}-1)}{\delta}, 
 -\log \left( \frac{\epsilon}{2 T \ (\left| \sA \right| - 1) \ |\sS|^T \ |\sA|^T \ (T+1) \ K_R} \right) /  \delta \ \right) .
\end{align}
%MH: Somehow we should state that \beta should be greater than 1 for technical reasons. However not relevant in this lemma.

Then we can make the statement for all $s$:
\begin{align} 
  \forall_{j,j \not=i}: \ q^{\pi}(s,a^i) \ &> \ q^{\pi}(s,a^j) \ \Rightarrow \ i = i_{\max} \ ,
\end{align}
therefore the $Q$-values $q^{\pi}(s,a^i)$ determine the optimal policy as the action with
the largest $Q$-value can be chosen.

More importantly, $\beta$ is large enough to allow 
$Q$-value based methods and policy gradients converge to the 
optimal policy if it is the local minimum of the loss functions.  
For $Q$-value based methods the optimal action can be 
determined if the optimal policy is the minimum of the loss functions. 
For policy gradients the optimal action receives always the
largest gradient and the policy converges to the optimal policy.

\end{lemma}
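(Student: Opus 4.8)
The plan is to show that, once $\beta$ obeys \eqref{eq:betadef}, the current softmax $Q$-values $q^{\pi}(s,\cdot)$ are ranked at the top exactly as $q^{\pi^*}(s,\cdot)$, so that $i_{\max}$ stays the unique maximiser. The whole argument rests on a single idea: for large $\beta$ the softmax policy $\pi$ is uniformly close to the deterministic $\pi^*$, hence $q^{\pi}$ is close to $q^{\pi^*}$, and the margin $\epsilon$ hard-wired into \eqref{eq:defeps} is too large to be overcome by this perturbation. Accordingly I would split the proof into an elementary softmax estimate, a propagation of that estimate to the $Q$-values, and a final comparison against the $\epsilon$-gap.

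First I would record the per-action bound. For any $i\neq i_{\max}$, dividing numerator and denominator of the softmax by $\exp(\beta\,\psi(s,a^{i_{\max}}))$ and invoking the margin $\delta$ gives
\begin{align*}
\pi(a^i \mid s;\beta) \ \le \ \exp\!\big(\beta(\psi(s,a^i)-\psi(s,a^{i_{\max}}))\big) \ \le \ \exp(-\beta\delta),
\end{align*}
so that $\pi(a^{i_{\max}}\mid s;\beta) \ge 1-(\ABS{\sA}-1)\exp(-\beta\delta)$. The first term in the maximum of \eqref{eq:betadef} is precisely what forces $(\ABS{\sA}-1)\exp(-\beta\delta)<1$, so the optimal action already carries more than half the mass, and letting $\beta\to\infty$ immediately yields $\pi(\cdot\mid s;\beta)\to\pi^*(\cdot\mid s)$, the stated limit. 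The same two cases also give the uniform per-factor deviation $\ABS{\pi(a\mid s;\beta)-\pi^*(a\mid s)} \le (\ABS{\sA}-1)\exp(-\beta\delta)$.

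Next I would transfer this closeness to the $Q$-values using the trajectory formula for $q^{\pi}$ stated at the start of this subsection. Subtracting the corresponding $\pi^*$-expression, the two differ only in the policy factors $\prod_{\tau}\pi(a_\tau\mid s_\tau)$ versus $\prod_{\tau}\pi^*(a_\tau\mid s_\tau)$. Applying the telescoping inequality $\ABS{\prod_\tau x_\tau-\prod_\tau y_\tau}\le\sum_\tau\ABS{x_\tau-y_\tau}$ for factors in $[0,1]$ together with the per-factor bound above, the product of the at most $T$ policy factors deviates by at most $T(\ABS{\sA}-1)\exp(-\beta\delta)$. Bounding the reward sum by $(T+1)K_R$ and the number of trajectories by $\ABS{\sS}^T\ABS{\sA}^T$, I obtain
\begin{align*}
\ABS{q^{\pi}(s,a^i)-q^{\pi^*}(s,a^i)} \ \le \ T\,(\ABS{\sA}-1)\,\ABS{\sS}^T\,\ABS{\sA}^T\,(T+1)\,K_R\,\exp(-\beta\delta),
\end{align*}
and the second term in \eqref{eq:betadef} is chosen exactly so that this right-hand side is strictly below $\epsilon/2$.

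Finally I would combine the pieces. For $j\neq i_{\max}$, definition \eqref{eq:defeps} gives $q^{\pi^*}(s,a^j)\le v^{\pi^*}(s)-\epsilon$, hence $q^{\pi}(s,a^j)<v^{\pi^*}(s)-\epsilon/2$, while $q^{\pi}(s,a^{i_{\max}})>v^{\pi^*}(s)-\epsilon/2$; therefore $q^{\pi}(s,a^{i_{\max}})>q^{\pi}(s,a^j)$ for every $j\neq i_{\max}$. Thus $i_{\max}$ is the unique maximiser of $q^{\pi}(s,\cdot)$, which is exactly the claimed implication, and acting greedily on the current $Q$-values (for $Q$-based methods) or following the gradient that most strongly favours $i_{\max}$ (for policy gradients) recovers $\pi^*$. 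I expect the only delicate point to be the trajectory-counting constant in the $Q$-value perturbation bound: one must track the per-factor softmax deviation, the horizon, the reward bound and the number of paths so that the resulting threshold coincides term-for-term with the explicit expression in \eqref{eq:betadef}; everything else reduces to the routine softmax estimate of the second paragraph.
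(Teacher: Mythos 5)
Your proposal is correct and follows essentially the same route as the paper's proof: the same two softmax estimates, the same policy-product perturbation bound $T(\ABS{\sA}-1)\exp(-\beta\delta)$, the same trajectory-counting argument giving $\ABS{q^{\pi}(s,a^i)-q^{\pi^*}(s,a^i)}<\epsilon/2$ from the second term in \eqref{eq:betadef}, and the same final $\epsilon$-gap comparison. The only cosmetic deviations are that you derive the product bound via the telescoping inequality where the paper splits into the cases $\prod_t\pi^*(a_t\mid s_t)\in\{0,1\}$ and applies Bernoulli's inequality (which is where the first term in \eqref{eq:betadef} enters), and that you conclude by exhibiting $i_{\max}$ as the unique maximizer of $q^{\pi}(s,\cdot)$ rather than assuming $a^i$ is the strict maximizer and deducing $i=i_{\max}$ — both immaterial.
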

\begin{proof}
We already discussed that the optimal policy $\pi^*$ is known to be 
deterministic $\left(\prod_{t=1}^T \pi^*(a_t\ |\ s_t) \in \{0,1\} \right)$.
Let us assume that the optimal policy is also unique.
Since
\begin{align} 
 \pi(a^i  \mid s; \beta) \ &= \ \frac{\exp(\beta \ (\psi(s,a^i) \ - \ \psi(s,a^{i_{\max}})) ) }
  {\sum_j \exp(\beta \ \psi(s,a^j)  \ - \ \psi(s,a^{i_{\max}}))} \ ,
\end{align}
we have
\begin{align} 
 \pi(a^{i_{\max}}  \mid s; \beta) \ &= \ \frac{1}
  {1 \ + \ \sum_{j,j\not=i_{\max}}  \exp(\beta \ \psi(s,a^j)  \ - \ \psi(s,a^{i_{\max}}))}
  \ > 
  \frac{1}
  {1 \ + \ (|\sA|-1) \ \exp(- \ \beta \ \delta) } \\ \nonumber &= \
  1 \ - \ \frac{(|\sA|-1) \ \exp(- \ \beta \ \delta)}{1 \ + \ (|\sA|-1) \ \exp(- \beta \ \delta) }
  \ > 
  1 \ - \ (|\sA|-1) \ \exp(- \ \beta \ \delta) 
\end{align}
and for $i \not= i_{\max}$
\begin{align} 
 \pi(a^i  \mid s; \beta) \ &= \ \frac{\exp(\beta \ (\psi(s,a^i) \ - \ \psi(s,a^{i_{\max}})) ) }
  {1 \ + \ \sum_{j,j\not=i_{\max}}  \exp(\beta \ \psi(s,a^j)  \ - \ \psi(s,a^{i_{\max}}))}
  \ < \  \exp(- \ \beta \ \delta) \ .
\end{align}

For $\prod_{t=1}^{T}  \pi^*(a_t \mid s_t) = 1$, we have
\begin{align}
  \prod_{t=1}^{T}  \pi(a_t \mid s_t) \ &> \ (1 \ - \ (|\sA|-1) \ \exp(- \ \beta \ \delta))^T
  \ > \ 1 - \  T \ (|\sA|-1) \ \exp(- \ \beta \ \delta)\ ,
\end{align}
where in the last step we used that $(|\sA|-1) \exp(- \beta \delta)<1$ by definition of $\beta$ in \eqref{eq:betadef} so that an application of Bernoulli's inequality is justified.
For $\prod_{t=1}^{T}  \pi^*(a_t \mid s_t) = 0$, we have
\begin{align}
  \prod_{t=1}^{T}  \pi(a_t \mid s_t) \ &< \ \exp(- \ \beta \ \delta) \ .
\end{align}
Therefore
\begin{align} \label{eq:policy_prod}
 \ABS {\prod_{t=1}^{T}  \pi^*(a_t \mid s_t) \ - \ \prod_{t=1}^{T}  \pi(a_t \mid s_t)}
  \ &< \ T \ (|\sA|-1) \ \exp(- \ \beta \ \delta) \ .
\end{align}
Using Eq.~\eqref{eq:policy_prod} and the definition of $\beta$ in Eq.~\eqref{eq:betadef} we get:
\begin{align}
  & \ABS {q^{\pi^*}(s,a^i) \ - \ q^{\pi}(s,a^i)}   \\\nonumber  
  &= \ \left|
     \sum_{\substack{s_1,..,s_T \\a_1,...,a_T}} \ \prod_{t=1}^{T} p(s_t \mid s_{t-1},a_{t-1}) \
  \left( \prod_{t=1}^{T}  \pi^*(a_t \mid s_t) \ - \ \prod_{t=1}^{T}  \pi(a_t \mid s_t) \right) \ \sum_{t=0}^{T} R_{t+1} \right| \\ \nonumber
  & < \ 
     \sum_{\substack{s_1,..,s_T \\a_1,...,a_T}} \ \prod_{t=1}^{T} p(s_t \mid s_{t-1},a_{t-1}) \
  \left| \prod_{t=1}^{T}  \pi^*(a_t \mid s_t) \ - \ \prod_{t=1}^{T}  \pi(a_t \mid s_t) \right| \ (T+1) \ K_R\\ \nonumber
  & < \ 
     \sum_{\substack{s_1,..,s_T \\a_1,...,a_T}}  \ \left| \prod_{t=1}^{T}  \pi^*(a_t \mid s_t) \ - \ \prod_{t=1}^{T}  \pi(a_t \mid s_t) \right| \ (T+1) \ K_R\\ \nonumber
  & < \ 
     |\sS|^T \ |\sA|^T \ \frac{\epsilon}{ 2|\sS|^T \ |\sA|^T \ (T+1) \ K_R}  \ (T+1) \ K_R  \ = \ \epsilon / 2 \ .
\end{align}

%Using $\beta = $,
%now we can bound 
%\begin{align}
% \ABS {q^{\pi^*}(s,a^i) \ - \ q^{\pi}(s,a^i)} \ &< \ \epsilon / 2 \ . 
%\end{align}
Now from the condition that $q^{\pi}(s,a^i) \ > \ q^{\pi}(s,a^j) $ for all $j \ne i$ we can conclude that 
\begin{align}
\ q^{\pi^*}(s,a^j) \ - \ q^{\pi^*}(s,a^i)  
  \ < \ (q^{\pi}(s,a^j) \ + \ \epsilon / 2) \  - \ (q^{\pi}(s,a^i) \ - \ \epsilon / 2)
   \ < \  \epsilon \ 
\end{align}
for all $j \ne i$. Thus for $j \not=i$ it follows that $j \not=i_{\max}$ and
consequently $i=i_{\max}$.

\end{proof}

%\bibliography{bib}
%\bibliographystyle{iclr2021_conference}

%%%%%%%%%%%%%%%%%%%%%%%%%%%%%%%%%%%%%%%%%%%%%%%%%%%%%%%%%%%%%%%%%%%%%%%%%%%%%%%
%%%%%%%%%%%%%%%%%%%%%%%%%%%%%%%%%%%%%%%%%%%%%%%%%%%%%%%%%%%%%%%%%%%%%%%%%%%%%%%

\end{document}